\newtheorem{theorem}{Theorem} 
\newtheorem{corollary}{Corollary}
\newtheorem{remark}{Remark}
\title{On the Efficiency of Subclass Knowledge Distillation in Classification Tasks}
\author{
Ahmad Sajedi, Konstantinos N. Plataniotis\\
}
\begin{document}
\maketitle
\frenchspacing
\begin{abstract}
This work introduces a novel knowledge distillation framework for classification tasks where information on existing subclasses is available and taken into consideration. In classification tasks with a small number of classes or binary detection (two classes) the amount of information transferred from the teacher to the student network is restricted, thus limiting the utility of knowledge distillation. Performance can be improved by leveraging information about possible subclasses within the available classes in the classification task. To that end, we propose the so-called Subclass Knowledge Distillation (SKD) framework, which is the process of transferring the subclasses' prediction knowledge from a large teacher model into a smaller student one. Through SKD, additional meaningful information which is not in the teacher's class logits but exists in subclasses (e.g., similarities inside classes) will be conveyed to the student and boost its performance. Mathematically, we measure how many extra information bits the teacher can provide for the student via SKD framework. The framework developed is evaluated in clinical application, namely colorectal polyp binary classification. It is a practical problem with few original classes and a number of subclasses per class. In this application, clinician-provided annotations are used to define subclasses based on the annotation label's variability in a curriculum style of learning. A lightweight, low complexity student trained with the proposed framework achieves an F1-score of $85.05\%$, an improvement of $2.14\%$ and $1.49\%$ gain over the student that trains without and with conventional knowledge distillation, respectively. These results show that the extra subclasses' knowledge (i.e., $0.4656$ label bits per training sample in our experiment) can provide more information about the teacher generalization, and therefore SKD can benefit from using more information to increase the student performance.
\end{abstract}

\section{Introduction} \label{sec1}
In many real-world classification problems, each labeled class has a number of available semantically meaningful subclasses. For example, in the cancer diagnosis task, which involves the detection of benign and abnormal lesions, the abnormal class may have multiple subclasses in which each of them can express different types or organs of cancer disease \cite{oakden2020hidden, mlynarski2019deep}. Models trained exclusively on class labels often ignore the fine-grained knowledge of subclasses, which can have an effect on model training, particularly for clinical tasks such as cancer detection \cite{sohoni2020no, oakden2020hidden}. We can take advantage of this subclass knowledge by forcing the teacher model to train the subclass labels. Then the knowledge can be transferred from the teacher to the student network.

\begin{figure} 
    \centering
    \includegraphics[width= 0.85\linewidth]{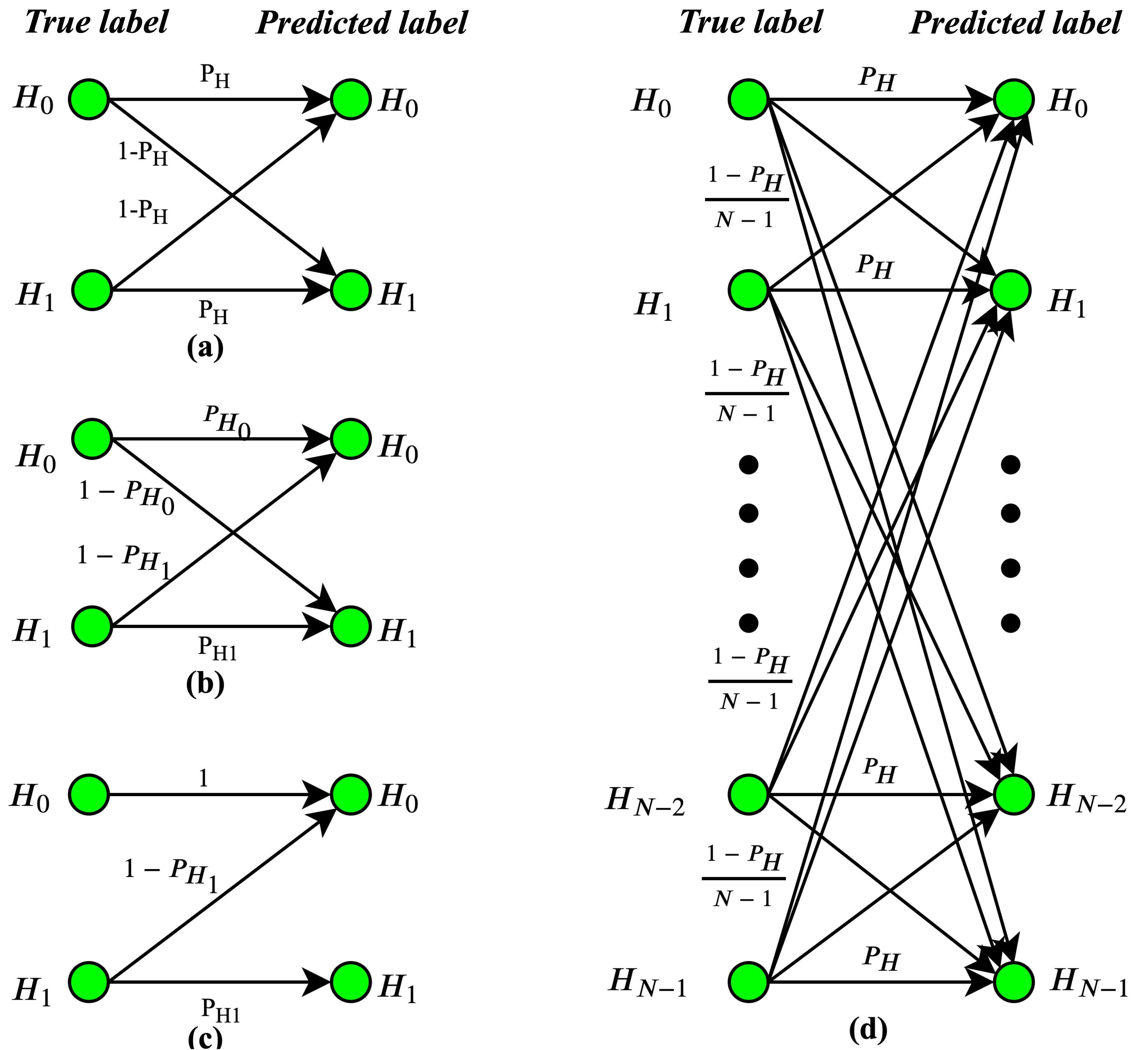}
    \caption{Discrete memoryless channels as potential models for quantifying the label bits that teacher can provide for the student: (a) Binary Symmetric Channel; (b) Binary Asymmetric Channel; (c) Z-Channel; (d) Q-ary Symmetric Channel, where N indicates the cardinality of input alphabet.}
    \label{fig1}
\end{figure}

The relative probabilities of incorrect class prediction (i.e., dark knowledge) can reveal a lot about the teacher generalization tendencies of the teacher. Soft targets probability of the teacher can extract and use the dark knowledge in conventional Knowledge Distillation (KD) \cite{kd}. As long as we distill the teacher's knowledge using soft logits at a high temperature, the amount of information how the teacher generalizes is linear in the number of classes \cite{skd}. When datasets contain many classes, knowledge transfer from teacher to student is typically successful, as the teacher has more relevant information about the function being taught \cite{skd}. Meanwhile, in classification tasks with a few classes or binary detection problems, the amount of information available to the student about the teacher's generalizability is restricted, thus limiting the utility of the KD approach. To address this problem, we can leverage hidden subclass knowledge, the knowledge of available subclasses that is not captured in the teacher's class logits. We propose the so-called Subclass Knowledge Distillation (SKD) framework to let us pass information from the teacher to the student via the soft subclass targets of the teacher (i.e., relative intra-class probabilities), which contain both hidden subclass and class knowledge.



In this paper, we introduce new tools and concepts from the field of information theory that can be applied to the machine learning community. From the perspective of Shannon's information theory, we investigate how well subclass knowledge can increase the information about the teacher's generalization through the SKD framework. To begin, we derive a closed-form expression for the number of label information bits that a teacher can provide to a student in a balanced dataset with the same number of training samples per class. Then, we establish an upper bound on the label information bits transmitted by the teacher network for real-world datasets with a biased distribution of training samples across classes or subclasses. It is worth nothing that our theoretical analysis allows us to figure out what is happening before we train the student. In other words, the teacher is evaluated first using our mathematical theorems, and then the potential teacher can train the student using SKD provided that it has sufficient knowledge to transfer. This is a minimalistic approach, which can help us avoid additional costly experimental resources. The proposed framework is evaluated on the Minimalistic HISTopathology (MHIST) dataset \cite{mhist} using a clinically binary classification task for colorectal polyps. We show that experimentally, the SKD framework utilizes more information about the generalization of the teacher than conventional KD does due to the learning of more fine-grained features, thereby improving student performance.  
To conclude, we summarize our contributions as follows:
\begin{itemize}
    \item We propose Subclass Knowledge Distillation (SKD), a novel framework to efficiently distill subclass knowledge into the student network and further boost its performance.
    
    \item We analyze how much extra information the student can learn about the teacher's generalization through the SKD framework in comparison to conventional KD.  

    \item We conduct an experimental study on the MHIST dataset, a clinically-important binary dataset, to evaluate the performance of the SKD framework and the motivation for transferring the subclass knowledge. Our experimental results demonstrate that the learned subclass factorization is useful for distillation and increases lightweight student performance due to the learning of more fine-grained features.
\end{itemize}

\section{Related Work} \label{sec2}

\textbf{Knowledge Distillation in Classification Tasks.} 
Transferring knowledge from one model to another is a research topic that has obtained noteworthy attention during recent years. Ba and Caruana \cite{caruana} trained a single and small neural network to imitate the logits of a large and complex neural network. Then, Hinton {\em et al.} \cite{kd} introduced KD and dark knowledge to claim that the deeper teacher model can successfully distill its knowledge into the smaller student neural network by matching their soft targets (softmax distributions).

Nowadays, a lot of successive papers have been written to propose different techniques to KD for model compression purposes. Romero {\em et al.} \cite{fitnet} distilled the feature representations of the teacher's intermediate layers to the student for improving the training stage of the student network. Transferring the attention maps \cite{attention, nst, tarvainen}, the inner products of intermediate activation maps \cite{innerproduct}, and relational knowledge between training samples \cite{relationalkd, tung, peng, liu} are some other methods to promote the distillation process from one model to another. However, these approaches ignored the possibility of available subclass knowledge within the classes and therefore did not take advantage of hidden subclass knowledge to improve student performance. By contrast, in this study, we use subclass knowledge to enhance the generalization ability of the teacher network.

\textbf{Subclass Knowledge Distillation.} 
The distillation of knowledge can be improved by increasing the amount of information that the teacher can transfer to the student. Müller {\em et al.} \cite{skd} compelled the teacher to create semantically meaningful subclasses for each class during its training phase with auxiliary contrastive loss. The student is then trained to mimic the invented teacher's subclasses predictions (probabilities). When the number of training samples per class is the same, they measured the number of bits of label information about how the teacher generalizes through subclass distillation in binary classification tasks. It should be noted that they only consider the cases where the number of subclasses in each class is equal, which is not always the case in real-world datasets. Maria {\em et al.} \cite{oskd, oskd2} proposed Online Subclass Knowledge Distillation (OSKD) as a method for estimating a set of subgroups and then training the lightweight model using a single-stage self distillation approach. These subgroups are estimated based on the different numbers of nearest neighbors in each sample. Although they show experimentally that KD can be improved by revealing the estimated subclass knowledge, they provide no mathematical justification for their work. Unlike previous methods for subclass distillation in the literature, we use known and available subclasses within each class. Our research aims to bridge the gap between mathematical and semantic explanation in a framework for subclass knowledge distillation.

\textbf{Histopathological Classification.}
In histopathology, image analysis is used to evaluate the characteristics of biopsies or manually inspected specimens under a pathologist's microscope. In recent years, deep learning has increased interest in using neural networks to analyze histopathology images, to the point where state-of-the-art Convolutional Neural Networks (CNNs) can perform at the level of pathologists in a variety of tasks \cite{HEKLER201979, prostatecancer}. Colorectal cancer is one of the most common types of cancer in the United States in $2021$ \cite{siegel2021cancer}. As a result, classification of colorectal polyps (small aggregates of cells that form on the surface of the colon, which can evolve into colonic cancer if left untreated) is a critical pathology task. In this paper, we evaluate our proposed framework in its clinically important application for gastrointestinal pathology. 

\section{Subclass Knowledge Distillation (SKD)} \label{sec3}

As mentioned previously, Subclass Knowledge Distillation is expected to make use of subclass knowledge when performing classification tasks with a small number of classes. In the following, we will elaborate on the details of the SKD framework in a teacher-student context where subclasses are known and available.

To begin, the teacher network trains on $S$ subclass labels and computes logits and subclass probabilities using the pre-softmax and softmax functions, $f_{T}(.)$ and $\sigma(.)$, respectively. In other words, the teacher network is trained using the ground-truth supervision of subclass labels by minimizing the cross-entropy loss (CE) associated with subclass probabilities that add to one:
\begin{flalign} \label{eq1}
 \mathcal{L}_{teacher} = \sum_{x_{i}\in \mathcal{X}}CE(\sigma(f_{T}(x_{i})), y_{i})
\end{flalign}
where $y_{i}\in \{0, 1\}^{S}$ indicates the one-hot encoded subclass label corresponding to the training sample $x_{i}$ of sample space $\mathcal{X}$.

The SKD framework, like conventional KD, is a process where the student is trained to mimic the teacher's behavior. However, instead of utilizing $C$ original classes, the student learns to match the teacher's output with $S$ subclasses that are always greater in number. The student creates $S$ output probabilities for each training sample $x_{i}$, resulting in the following SKD loss:
\begin{flalign} \label{eq2}
 \mathcal{L}_{skd} = \sum_{x_{i}\in \mathcal{X}} KL(\sigma(\frac{f_{T}(x_{i})}{\tau}), \sigma(\frac{f_{S}(x_{i})}{\tau}))
\end{flalign}
where KL denotes the Kullback-Leibler divergence and $f_{T}(.)$ and $f_{S}(.)$ are the teacher and student pre-softmax functions, respectively. The temperature hyperparameter, $tau$, is used to generate soft predictions while controlling the entropy of the output distribution.As the objective function for training the student network, we use a linear combination of the SKD loss $\mathcal{L}_{skd}$ and the standard cross entropy loss:
\begin{flalign} \label{eq3}
 \mathcal{L}_{student} = \lambda \mathcal{L}_{ce} + (1-\lambda) \mathcal{L}_{skd}
\end{flalign}
where $\mathcal{L}_{ce} = \sum_{x_{i}\in \mathcal{X}}CE(\sigma(f_{S}(x_{i})), y_{i})$ is the cross-entropy loss and $\lambda \in [0, 1]$ is a task balance hyperparameter. Following supervision of subclass labels to train the teacher and student networks, class output predictions can be determined simply by adding the probabilities of all subclasses within the class. Note that while we trained the teacher and student on subclass labels, they are evaluated on the class labels.     

In classification tasks with a few number of classes, the amount of knowledge the student learns about the teacher's generalization is limited. Thus, in this case, the SKD framework can help the student improve performance by leveraging hidden subclass knowledge, the additional knowledge of known subclasses within each class. This knowledge of subclasses is useful in the SKD framework because subclass labels allow the teacher to learn more features than class labels. When evaluating on class labels, these fine-grained subclass labels can help the teacher generalize better. Assume that all samples in the training set from class $i$ have the unique feature $f_{1}$ that no other class has. Due to the network's tendency to learn only the most discriminative features \cite{bilen2016weakly}, the teacher learns the feature $f_{1}$ to predict the class $i$ as long as we train the teacher with class labels. Then, in the test set, If the new image of class $i$ lacks the feature $f_{1}$, the teacher will predict the wrong class. Subclass training enables the teacher to learn more features necessary for predicting fine-grained subclasses. These additional features can improve the teacher performance in class-level classification tasks (the teacher can correctly predict a new sample of class $i$ if it has the other features trained by subclass labels, even if the sample lacks the feature $f_{1}$).

To demonstrate the effectiveness of our framework, we calculate the number of label bits that the teacher can provide to the student using different types of discrete memoryless channels. The information theory channel is a system whose output is probabilistically dependent on its input \cite{cover}. Every channel is defined by an input alphabet, an output alphabet, and a description of how the output depends on the input. In this paper, the true label space and the predicted label space are the input and the output alphabets of our channel, namely $\mathcal{A}$ and $\mathcal{\hat{A}}$, respectively. Similar to the channel transition matrix in information theory, the normalized confusion matrix on the training set illustrates the relationship between the predicted and true labels of the teacher network. Furthermore, and most importantly, the information capacity of each channel indicates the amount of information it transmits, which is equivalent to the information label bits that the teacher can provide to the student in our study. It should be mentioned that all channels used in the paper are memoryless, as each predicted label is influenced only by the corresponding true label, not by earlier true or predicted labels.

In the following theorem, we measured how many label information bits the teacher can transfer to the student when each class contains the same number of training samples as well as an equal number of subclasses. This is the case for datasets such as MNIST, CIFAR100, and ImageNet dog vs. cat \cite{chen2018understanding}.
\begin{theorem} \label{th1}
Suppose that the dataset is balanced, i.e., the number of training samples per class is the same and each class has $N_{S}$ subclasses. Let the teacher predict each subclass and class correctly with a probability of $P_{S}$ and $P_{C}$. As long as their remaining error probabilities are identically distributed among the remaining subclasses and classes, the number of label information bits the teacher can transfer is characterized by 
\begin{flalign} \label{eq4}
 &[\log N_{C} + P_{C}\log P_{C} + (1-P_{C}) \log  \frac{1-P_{C}}{N_{C}-1}] +  \nonumber \\
 & [\log N_{S} + P_{S}\log P_{S} + (1-P_{S}) \log  \frac{1-P_{S}}{N_{S}-1}]
\end{flalign}
where the first and second parts are derived from the class and subclass labels, respectively. $N_{C}$ represent the total number of classes. (In this paper, all information quantities are represented in bits, and the $\log$ function is to base $2$.) 
\end{theorem}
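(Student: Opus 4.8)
The plan is to identify the ``number of transferable label bits'' with the information capacity of the teacher's prediction channel, exactly as set up in the discussion preceding the theorem, and then to exploit the hierarchical structure of the label alphabet --- each subclass label is a pair (class, index-within-class) --- to split that channel into two parallel pieces whose capacities add up to the two brackets in \eqref{eq4}.

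The first ingredient is the capacity of a $Q$-ary symmetric channel. For a channel on $Q$ input/output symbols in which the correct symbol is produced with probability $P$ and the residual mass $1-P$ is spread uniformly over the other $Q-1$ symbols, the mutual information is $I(X;Y)=H(Y)-H(Y\mid X)$. By symmetry every row of the transition matrix has the same entropy, so $H(Y\mid X)=-P\log P-(1-P)\log\frac{1-P}{Q-1}$ regardless of the input law; and $H(Y)\le\log Q$, with equality precisely when $Y$ is uniform, which (again by symmetry) occurs when $X$ is uniform. Hence the capacity equals $\log Q+P\log P+(1-P)\log\frac{1-P}{Q-1}$. Because the dataset is balanced, the true-label distribution actually fed into the channel \emph{is} uniform both at the class level and at the subclass level, so this maximum is attained and mutual information coincides with capacity throughout.

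Next I would carry out the decomposition. Write $A=(A_c,A_j)$ for the true subclass, $A_c$ its class and $A_j$ its within-class index, and $\hat A=(\hat A_c,\hat A_j)$ for the teacher's prediction. The assumption that the teacher is correct on the class with probability $P_C$ and errs uniformly over the remaining $N_C-1$ classes makes the class component an $N_C$-ary symmetric channel; the assumption involving $P_S$, together with the uniformity of the residual subclass errors, makes the within-class component an $N_S$-ary symmetric channel, identical across the $N_C$ classes. Using the chain rule $I(A_c,A_j;\hat A_c,\hat A_j)=I(A_c;\hat A_c)+I(A_j;\hat A_j\mid A_c)$ --- with the cross terms vanishing under the stated symmetry and independence of the two confusion levels --- I would evaluate the first term with the formula above at $Q=N_C,\,P=P_C$ and the second term (averaged over the equiprobable classes, each contributing the same value) at $Q=N_S,\,P=P_S$, and add. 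The result is exactly \eqref{eq4}: the first bracket is the class contribution already available to conventional KD, and the second bracket is the extra subclass contribution.

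The step I expect to be the main obstacle is making the additive split airtight, i.e.\ showing that under the theorem's hypotheses the class-level and within-class-level confusion are uncorrelated so that the joint channel genuinely factors as the product of the two symmetric channels, and that $I(X_1X_2;Y_1Y_2)=H(Y_1Y_2)-H(Y_1\mid X_1)-H(Y_2\mid X_2)\le[H(Y_1)-H(Y_1\mid X_1)]+[H(Y_2)-H(Y_2\mid X_2)]$ holds with equality at the product-uniform input (which the balanced-dataset assumption supplies). Once this structural reduction is in place, the rest --- the $Q$-ary symmetric capacity computation and the two substitutions --- is a routine entropy calculation.
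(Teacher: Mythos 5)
Your proposal follows essentially the same route as the paper: model the class-level and within-class subclass-level confusion as $Q$-ary symmetric channels, note that the balanced dataset makes the input uniform so mutual information attains capacity $\log Q + P\log P + (1-P)\log\frac{1-P}{Q-1}$ at each level, and add the two contributions (the paper's subclass term being a trivial $\tfrac{1}{N_C}$-weighted average over the $N_C$ identical per-class channels). The one point where you go beyond the paper is the chain-rule/product-channel justification of the additive split --- the paper simply sums the two capacities without arguing that the joint information decomposes --- so the "main obstacle" you flag is real, but it is left implicit in the original proof as well.
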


\begin{proof}
Since the remaining error probabilities for each class (subclass, resp.) are distributed uniformly over the remaining $(N_{C}-1)$ classes ($(N_{S}-1)$ subclasses, resp.), the normalized confusion matrix for class (subclass, resp.) classification will follow the structural pattern of the transition matrix for ``\textbf{Q-ary Symmetric Channel}" (Figure \ref{fig1}(d): $N = N_{C}$ and $P_{H} = P_{C}$ for class labels and $N = N_{S}$ and $P_{H} = P_{S}$ for subclass labels). While the number of training samples per class is the same, the knowledge that the teacher can provide via the class label is equal to the capacity of Q-ary Symmetric channel \cite{cover}. This capacity is achieved by a uniform input distribution and is given by:
\begin{flalign} \nonumber
& \log N_{C} - H(P_{C}, \frac{1-P_{C}}{N_{C}-1}, ..., \frac{1-P_{C}}{N_{C}-1}) =  \nonumber \\
 & \log N_{C} + P_{C} \log P_{C} + (1-P_{C}) \log \frac{1-P_{C}}{N_{C}-1} \nonumber
\end{flalign}
where $H(x_{1},x_{2}, ..., x_{n}) = -\sum_{i=1}^{n}x_{i}\log x_{i}$ denotes entropy function. \\
The capacity of the Q-ary Symmetric Channel is also used to calculate how much information $N_S$ subclasses of the class $i$ can transfer to the student. Provided that each class contains the same number of training samples, the following weighted average can be used to determine the average number of subclass label bits per training sample.
\begin{flalign} \nonumber
 \sum_{i = 1}^{N_{C}}
& \frac{1}{N_C}[\log N_{S} - H(P_{S}, \frac{1-P_{S}}{N_{S}-1}, ..., \frac{1-P_{S}}{N_{S}-1})]. \mathbb{1}_{i > 0} = \nonumber \\
& \frac{N_{C}}{N_{C}}[\log N_{S} - H(P_{S}, \frac{1-P_{S}}{N_{S}-1}, ..., \frac{1-P_{S}}{N_{S}-1})] =  \nonumber \\
 & \log N_{S} + P_{S} \log P_{S} + (1-P_{S}) \log \frac{1-P_{S}}{N_{S}-1} \nonumber
\end{flalign}
where $\frac{1}{N_{C}}$ is the weight applied to the subclass label bits of class $i$ and $\mathbb{1}$ denotes the indicator function. Finally, the summation of class and subclass label information completes the proof of Theorem \ref{th1}.
\end{proof}

Müller {\em et al.}'s \cite{skd} analytical measurement on label bits is a special case of Theorem \ref{th1}. When the teacher network learns the subclasses perfectly, it provides $(\log{N_{C}}+\log{N_{S}}) = \log{(N_{C}N_{S})}$ label bits per training sample, which is a meaningful expression.

Although Müller {\em et al.} have done some analytical work on subclass distillation, their work is limited to balanced datasets such as balanced CIFAR-10 \cite{cifar10} and MNIST \cite{mnist}. In real-world datasets such as those from the medical field \cite{Dua:2019, bhattacharjee2001classification, esteva2017dermatologist}, each class may contain a different number of subclasses, and each subclass has multiple training instances. In the rest of this section, we are considering a particular type of classification problem. Within the problem of data-driven classification, there is a case where basically it is extremely important, which is the so-called detection case. The detection task is primarily a binary classification of the hypothesis under consideration, with the result typically being either a null hypothesis $H_{0}$ or an alternative hypothesis $H_{1}$. The detection problem is practically important because, for example, this is the case when someone tries to detect whether a person has cancer or not. In cancer diagnosis tasks, alternative hypothesis $H_{1}$ may have $N_{H}$ subclasses in which each of them can express different types or organs of cancer disease \cite{oakden2020hidden, mlynarski2019deep} (Fig. \ref{fig2}). Furthermore, it is fair to say that the majority of training samples are identified as normal class \cite{imbalance}, resulting in a biased dataset. For our binary detection task, we establish an upper bound on the number of label bits the teacher can transfer to help the student generalize better by Theorem \ref{th2}. The detailed analysis of the general case, a multiclass classification task with a different number of subclasses in each class, can be found in the supplementary material.
  
\begin{figure} 
    \centering
    \includegraphics[width= 0.83\linewidth]{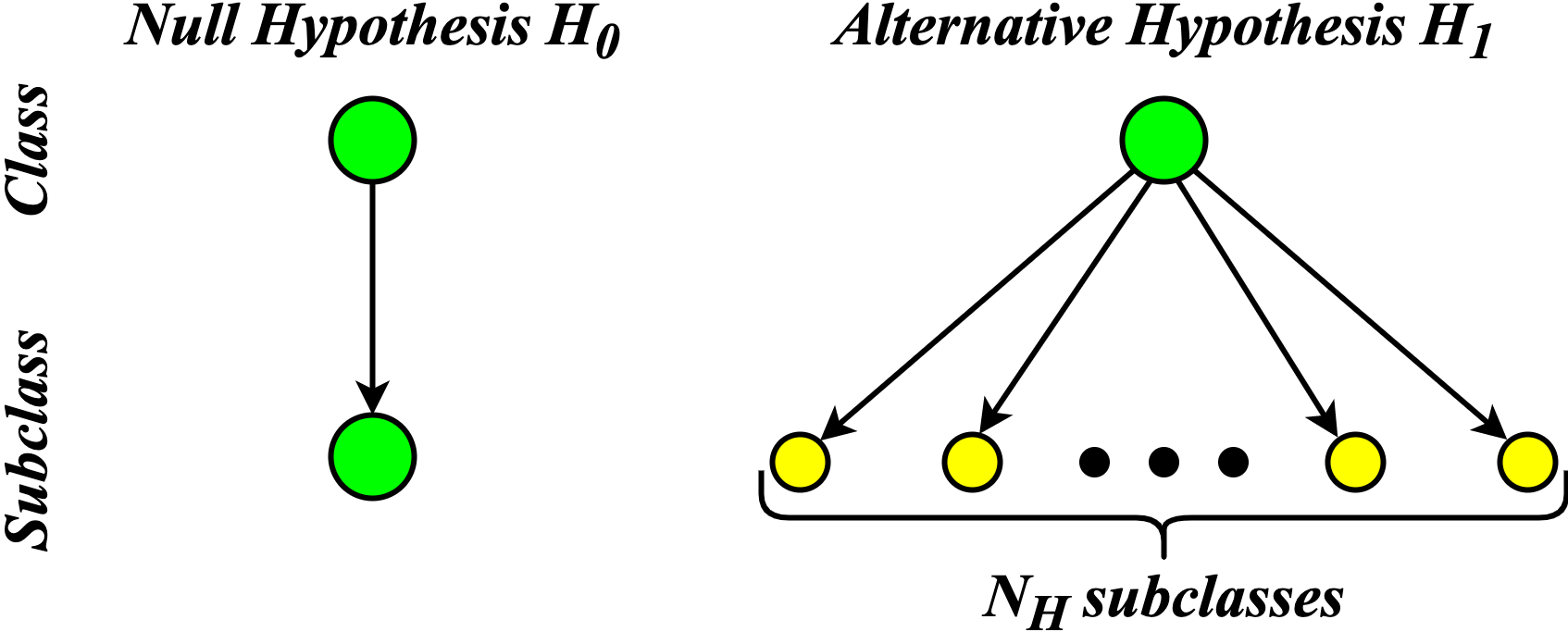}
    \caption{Class hierarchy of our binary detection problem.}
    \label{fig2}
\end{figure}

\begin{theorem} \label{th2}
Let the teacher network predict the null and alternative hypothesis correctly with a probability of $P_{H_{0}}$ and $P_{H_{1}}$, respectively. In the case the teacher predicts each subclass of alternative hypothesis properly with a probability of $P_{H_{11}}$ and the remaining errors are equally distributed throughout the remaining $(N_{H}-1)$ subclasses, the average number of label bits per training sample that the teacher can provide is bounded above by
\begin{flalign} \nonumber 
& [\log (1+2^{K(P_{H_{0}}, P_{H_{1}})}) -P_{H_{0}}K(P_{H_{0}}, P_{H_{1}}) - H_{b}(P_{H_{0}})] + \nonumber\\
 & [\alpha^{*} (\log  N_{H}+P_{H_{11}}\log P_{H_{11}} +  (1-P_{H_{11}})\log  \frac{1-P_{H_{11}}}{N_{H}-1})]\nonumber
\end{flalign}
where $H_{b}(x) = -x\log{x} - (1-x)\log(1-x)$ denotes a binary entropy function, $K(P_{H_{0}}, P_{H_{1}}) = \frac{H_{b}(P_{H_{1}}) - H_{b}(P_{H_{0}})}{P_{H_{0}}+P_{H_{1}}-1}$, and $\alpha^{*}$ is equal to
\begin{flalign} \nonumber
 &\frac{1}{(P_{H_{0}}+P_{H_{1}}-1)(2^{K(P_{H_{0}}, P_{H_{1}})}+1)} - \frac{1-P_{H_{0}}}{P_{H_{0}}+P_{H_{1}}-1}.
\end{flalign}
\end{theorem}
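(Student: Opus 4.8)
The plan is to follow the template of the proof of Theorem~\ref{th1}, but to account for the asymmetric binary confusion matrix and the class imbalance by replacing the $Q$-ary symmetric channel used for the class labels with a \textbf{Binary Asymmetric Channel} (Figure~\ref{fig1}(b)) and by passing from an exact capacity to an upper bound.

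First I would model the class-level problem as a binary asymmetric channel whose transition matrix is the normalized $2\times 2$ confusion matrix of the teacher: the true null hypothesis $H_0$ is received correctly with probability $P_{H_0}$, and the true alternative $H_1$ with probability $P_{H_1}$. For any class prior placing mass $\alpha$ on $H_1$, the mutual information between true and predicted class is $H_b\big((1-\alpha)(1-P_{H_0})+\alpha P_{H_1}\big)-(1-\alpha)H_b(P_{H_0})-\alpha H_b(P_{H_1})$; since this is the mutual information of a fixed channel, it is bounded above, uniformly in $\alpha$, by the channel capacity, so the class label bits the teacher can convey are at most that capacity. To obtain the capacity in closed form I would use the standard first-order (Karush--Kuhn--Tucker) characterization of capacity --- the conditional-output relative entropy $D\big(p(\cdot\,|\,x)\,\|\,r^{*}\big)$ is the same for every input $x$ used by the optimal distribution --- which forces the capacity-achieving output probability to be $r^{*}=1/\big(1+2^{K(P_{H_0},P_{H_1})}\big)$ with $K(P_{H_0},P_{H_1})=\big(H_b(P_{H_1})-H_b(P_{H_0})\big)/(P_{H_0}+P_{H_1}-1)$. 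Plugging $r^{*}$ back into $D\big(p(\cdot\,|\,H_0)\,\|\,r^{*}\big)$ gives the capacity $\log\big(1+2^{K(P_{H_0},P_{H_1})}\big)-P_{H_0}K(P_{H_0},P_{H_1})-H_b(P_{H_0})$, which is the first bracket in the statement, and solving $r^{*}=(1-\alpha^{*})(1-P_{H_0})+\alpha^{*}P_{H_1}$ for $\alpha^{*}$ yields exactly the capacity-achieving class prior displayed in the theorem.

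Second, for the subclass labels I would reuse the computation from the proof of Theorem~\ref{th1}: inside the alternative hypothesis the $N_H$ subclasses are confused according to a $Q$-ary symmetric channel with $N=N_H$ and correct-probability $P_{H_{11}}$, so a training sample belonging to $H_1$ can carry at most $\log N_H+P_{H_{11}}\log P_{H_{11}}+(1-P_{H_{11}})\log\frac{1-P_{H_{11}}}{N_H-1}$ bits of subclass information (with equality only when the within-$H_1$ subclass distribution is uniform, which is why this is an upper bound in general). Averaging over the whole training set, the subclass contribution to the per-sample label bits equals that quantity weighted by the fraction of training samples drawn from $H_1$; because a detection dataset is biased toward the normal class, this fraction is no larger than the capacity-achieving value $\alpha^{*}$, at which the class term also attains its maximum. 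Adding the class upper bound to the $\alpha^{*}$-scaled subclass upper bound yields the two-bracket expression claimed, and one should check the sanity case $P_{H_0}=P_{H_1}$: then $K(P_{H_0},P_{H_1})=0$, $\alpha^{*}=\tfrac12$, and the bound collapses to the binary instance of Theorem~\ref{th1}.

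The step I expect to be the main obstacle is the capacity computation for the binary asymmetric channel: unlike the symmetric channels of Theorem~\ref{th1} it has no one-line capacity formula, so the real work is solving the stationarity conditions of the concave maximization of mutual information over the two-point simplex, simplifying the answer into the $2^{K}$ form, and verifying $\alpha^{*}\in[0,1]$ under the stated hypotheses. The second, more delicate point is justifying --- rather than merely asserting --- that the bias toward the normal class lets one replace the unknown true class prior by $\alpha^{*}$ in the subclass term; once those two points are settled, the remainder is the bookkeeping of assembling the class capacity and the $\alpha^{*}$-weighted subclass capacity.
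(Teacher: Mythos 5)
Your proposal follows essentially the same route as the paper: bound the class-label bits by the capacity of a binary asymmetric channel, bound the subclass-label bits by the $\alpha^{*}$-weighted capacity of a $Q$-ary symmetric channel over the $N_{H}$ subclasses of $H_{1}$ (with the null hypothesis contributing zero), and sum the two. The only cosmetic difference is that you derive the BAC capacity via the KKT equal-divergence condition on the output distribution, whereas the paper differentiates the mutual information directly with respect to the prior $\alpha$; both yield the same $\alpha^{*}$ and the same closed form, and your flagged concern about justifying the $\alpha^{*}$ weight in the subclass term is no worse than the paper's own treatment of that step.
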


\begin{proof}
Inasmuch as the task is binary detection and the teacher predicts both classes with different probabilities in general, the normalized confusion matrix for the class classification task will follow the structural pattern of the channel transition matrix for \textbf{Binary Asymmetric Channel (BAC)} (Fig. \ref{fig1}(b)). Therefore, the information capacity of BAC tells us the maximum number of label bits that the teacher can convey to the student via class labels. Let $Y$ and $\hat{Y}$ be random variables taking values in $\mathcal{A}$ and $\mathcal{\hat{A}}$, respectively. Without loss of generality, we assume that $P_{H_{1}} \leq P_{H_{0}}$ and $\alpha$ denotes the probability of event that the training sample belongs to the alternative hypothesis. The capacity of BAC is then given by      
\begin{flalign} \label{eq5}
C_{BAC} = \max_{\alpha}& \hspace{4pt} I(Y;\hat{Y}) \overset{(a)}= \max_{\alpha} \hspace{4pt} (H(\hat{Y})-H(\hat{Y}|Y))   \nonumber \\
  \overset{(b)}= \max_{\alpha} & [H_{b}(\alpha P_{H_{1}} + (1-\alpha)(1-P_{H_{0}})) - \nonumber \\ 
  & (1-\alpha)H_{b}(P_{H_{0}}) - \alpha H_{b}(P_{H_{1}})] 
\end{flalign}
where $H(\hat{Y}|Y)$ denotes the conditional entropy of $\hat{Y}$ given $Y$ and $(a)$ and $(b)$ are followed from the definition of mutual information $I(Y;\hat{Y})$ and the mutual information corresponding to BAC, respectively. To achieve the optimal point, we calculate the derivative of the cost function with respect to $\alpha$ and, after some simplifications, obtain
\begin{flalign} \nonumber
 \alpha^{*} = \frac{1}{(P_{H_{0}}+P_{H_{1}}-1)} [\frac{1}{2^{K(P_{H_{0}}, P_{H_{1}})}+1}-(1-P_{H_{0}})]
\end{flalign}
where $K(P_{H_{0}}, P_{H_{1}}) = \frac{H_{b}(P_{H_{1}})-H_{b}(P_{H_{0}})}{P_{H_{0}}+P_{H_{1}}-1}$. The capacity of BAC is then calculated by substituting $\alpha = \alpha^{*}$ into the cost function of Equation \ref{eq5}, as shown below.
\begin{flalign}  \nonumber
\log (1+2^{K(P_{H_{0}}, P_{H_{1}})})-P_{H_{0}}K(P_{H_{0}}, P_{H_{1}}) - H_{b}(P_{H_{0}}) 
\end{flalign}
Unless the relative frequencies of training samples over classes match the capacity-achieving distributions, the capacity of BAC will be an upper bound on the number of label bits that the teacher can provide via class labels. In other words, if the relative frequency of training samples over alternative hypothesis converges to $\alpha^{*}$, the upper bound will be tight and converge to the real label bits. \\
In parallel with Theorem \ref{th1}, Q-ary Symmetric Channel could be a suitable model to analyze the subclass label bits since the normalized confusion matrix for the subclass classification task follows the structural pattern of the channel transition matrix for Q-ary Symmetric Channel (Fig. \ref{fig1}(d): $N=N_{H}$ and $P_{H} = P_{H_{11}}$). Q-ary symmetric channel capacity, in conjunction with proof of Theorem \ref{th1}, gives us the desired upper bound on the number of subclass label bits provided by the teacher to the student.
\begin{flalign} \nonumber
[(1-\alpha^{*})\times 0] + [\alpha^{*}(\log &N_{H}+{P_{H_{11}}}\log P_{H_{11}} \nonumber \\+ (1-P&_{H_{11}})\log \frac{1-P_{H_{11}}}{N_{H}-1})] = \nonumber \\ 
\alpha^{*}(\log N_{H}+{H_{11}}\log P_{H_{11}}&+(1-P_{H_{11}})\log \frac{1-P_{H_{11}}}{N_{H}-1})] \nonumber
\end{flalign}
where $\alpha^{*}$ ($1-\alpha^{*}$, resp.) denotes the relative frequency of training samples over alternative hypothesis (null hypothesis, resp.) when the upper bound on class label bits is tight. It is important to mention that there is one subclass in the normal class, which is the class itself. At the end, the summation of upper bounds on class and subclass label bits completes the proof of Theorem \ref{th2}.   
\end{proof}

In medical applications, such as cancer detection, the prediction of the normal class is easier than the abnormal one \cite{karabatak2015new, pawar2013breast}. Suppose that the teacher can, ideally, predict the null hypothesis. In this case, we can analyze the number of class and subclass label bits using \textbf{Z-Channel} (Fig. \ref{fig1}(c)), a particular type of BAC, and Q-ary Symmetric Channel, respectively. The following corollary illustrates this point.
\begin{corollary} \label{cor1}
Suppose that the Null Hypothesis $H_{0}$ is predicted ideally $(i.e., P_{H_{0}} = 1)$, and $P_{H_{1}}$ denotes the prediction probability of the alternative hypothesis. If all subclass conditions of Theorem $\ref{th2}$ are satisfied, the average number of label bits per training sample is bounded above by
\begin{flalign} \label{eq11}
[\log (1+2^{-K(P_{H_{1}})})] + &[\alpha^{*}(\log N_{H}+P_{H_{11}}\log P_{H_{11}}+\nonumber\\
   &(1-P_{H_{11}})\log \frac{1-P_{H_{11}}}{N_{H}-1})]
\end{flalign} \label{eq13}
where $K( P_{H_{1}}) = \frac{H_{b}(P_{H_{1}})}{P_{H_{1}}}$ and 
\begin{flalign} \nonumber
 \alpha^{*} = \frac{1}{P_{H_{1}}(2^{K(P_{H_{1}})}+1)} \in (0.3768, 0.5].
\end{flalign}
\end{corollary}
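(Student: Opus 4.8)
The plan is to recognize Corollary \ref{cor1} as the degenerate case $P_{H_0}=1$ of Theorem \ref{th2} and to push that specialization through each term of the bound. First I would substitute $P_{H_0}=1$ into the quantities appearing in Theorem \ref{th2}: the denominator $P_{H_0}+P_{H_1}-1$ collapses to $P_{H_1}$, and since $H_b(1)=0$ the quantity $K(P_{H_0},P_{H_1})=\frac{H_b(P_{H_1})-H_b(P_{H_0})}{P_{H_0}+P_{H_1}-1}$ becomes $K(P_{H_1})=\frac{H_b(P_{H_1})}{P_{H_1}}$. Then the class-label term $\log(1+2^{K})-P_{H_0}K-H_b(P_{H_0})$ of Theorem \ref{th2} simplifies to $\log(1+2^{K(P_{H_1})})-K(P_{H_1})$, and using $\log(1+2^{K})-K=\log\frac{1+2^{K}}{2^{K}}=\log(1+2^{-K})$ this equals exactly $\log(1+2^{-K(P_{H_1})})$, the first bracket of Equation \ref{eq11}. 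As an independent check I would also re-derive this directly: with $P_{H_0}=1$ the Binary Asymmetric Channel used in the proof of Theorem \ref{th2} degenerates into the \textbf{Z-Channel} of Fig.\ \ref{fig1}(c), whose mutual information $H_b(\alpha P_{H_1})-\alpha H_b(P_{H_1})$ is maximized at $\alpha=\alpha^{*}$ and evaluates to the same expression.

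For the remaining terms, the formula $\alpha^{*}=\frac{1}{(P_{H_0}+P_{H_1}-1)(2^{K}+1)}-\frac{1-P_{H_0}}{P_{H_0}+P_{H_1}-1}$ of Theorem \ref{th2} loses its second summand when $P_{H_0}=1$, leaving $\alpha^{*}=\frac{1}{P_{H_1}(2^{K(P_{H_1})}+1)}$ as claimed. The subclass term is inherited verbatim from Theorem \ref{th2}: the normal class still contains a single subclass (itself) and thus contributes $0$ bits, while the alternative hypothesis is still modeled by a Q-ary Symmetric Channel with $N=N_{H}$ and $P_H=P_{H_{11}}$, contributing $\alpha^{*}\big(\log N_{H}+P_{H_{11}}\log P_{H_{11}}+(1-P_{H_{11}})\log\frac{1-P_{H_{11}}}{N_{H}-1}\big)$. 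Adding the class and subclass brackets then yields Equation \ref{eq11}.

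The one genuinely new statement is the range $\alpha^{*}\in(0.3768,\,0.5]$, and establishing it is the main obstacle. I would view $\alpha^{*}$ as a function $g(p)=\big(p\,(2^{H_b(p)/p}+1)\big)^{-1}$ of $p=P_{H_1}\in(0,1]$ and argue: (i) $g(1)=1/2$, since $H_b(1)=0$; (ii) $\lim_{p\to 0^{+}}g(p)=1/e$, using the expansion $H_b(p)=p\log(e/p)+o(p)$, which forces $p\,2^{H_b(p)/p}\to e$; and (iii) $g$ is monotone on $(0,1]$, so its range is precisely the interval between these two endpoints. Step (iii) is the delicate part, because $p$ sits inside the exponent; the cleanest reduction I see is to note that the $\log p$ contributions cancel, so that $p\,2^{H_b(p)/p}=2^{\psi(p)}$ with $\psi(p)=-\frac{(1-p)\ln(1-p)}{p\ln 2}$, then to show $\psi$ is strictly decreasing on $(0,1)$ via the elementary bound $\ln q\le q-1$, and finally to verify $\frac{d}{dp}\big(2^{\psi(p)}+p\big)<0$, which after simplification reduces to the one-variable inequality $\big(-p-\ln(1-p)\big)\,2^{\psi(p)}>p^{2}$ on $(0,1)$. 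With monotonicity in hand the endpoint values pin down the interval, and I would then confirm the numerical value of the lower endpoint (which my estimate places at $1/e$) against the stated figure.
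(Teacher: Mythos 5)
Your main line is exactly the paper's: the authors also prove Corollary~\ref{cor1} by treating the Z-Channel as the $P_{H_{0}}=1$ special case of the BAC from Theorem~\ref{th2}, keeping the Q-ary symmetric subclass term verbatim, and reducing the class term via $\log(1+2^{K})-K=\log(1+2^{-K})$; your substitutions ($H_{b}(1)=0$, the denominator collapsing to $P_{H_{1}}$, the second summand of $\alpha^{*}$ vanishing) are all correct and match the paper's two-line argument. Where you genuinely go beyond the paper is the interval for $\alpha^{*}$: the paper asserts $\alpha^{*}\in(0.3768,0.5]$ with no justification, while you reduce it to endpoint evaluation plus monotonicity of $g(p)=\bigl(p(2^{H_{b}(p)/p}+1)\bigr)^{-1}=\bigl(2^{\psi(p)}+p\bigr)^{-1}$ with $\psi(p)=-\tfrac{(1-p)\ln(1-p)}{p\ln 2}$. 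Your identity $p\,2^{H_{b}(p)/p}=2^{\psi(p)}$ and the limit $g(p)\to 1/e$ as $p\to 0^{+}$ are correct (this is the classical fact that the Z-channel's capacity-achieving input probability tends to $1/e$ in the high-noise limit), and your instinct to check this against the printed figure pays off: since $1/e\approx 0.3679$ and, e.g., $g(0.1)\approx 0.373<0.3768$, the paper's lower endpoint $0.3768$ is almost certainly a digit-transposition typo for $0.3679$, so the interval should read $(1/e,\,0.5]$. The only piece of your argument left unfinished is the final one-variable inequality $\bigl(-p-\ln(1-p)\bigr)2^{\psi(p)}>p^{2}$ on $(0,1)$ that certifies monotonicity; it is true (the ratio of the two sides is bounded below by $e/2$, approached as $p\to 0^{+}$) but still needs to be written out --- note, though, that this is a claim the paper does not prove either, so your proposal is, if anything, more complete than the published argument.
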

\begin{proof}
Because Z-Channel is a special case of BAC, the proof is identical to the proof of Theorem \ref{th2}. The upper bound on the subclass label bits follows exactly Theorem \ref{th2}'s proof line; however, the number of class label bits will be bounded above by 
\begin{flalign}
\log(1+2^{K(P_{H_{1}})}) - K(P_{H_{1}}) = \log(1+2^{-K(P_{H_{1}})}). \nonumber
\end{flalign}
This completes the proof of Corollary \ref{cor1}.
\end{proof}

Precise examination of Theorems \ref{th1} and \ref{th2} reveals that we need a condition under which the normalized confusion matrix for subclass and/or class classification follows the structural pattern of Q-ary Symmetric Channel's transition matrix. To generalize this condition, we can consider two structural patterns for the transition matrix: \textbf{(I) Strong Symmetric Channel}: each row (column, resp.) is a permutation of the other rows (columns, resp.); \textbf{(II) Weakly Symmetric Channel}: each row is a permutation of the other rows, and all the column sums are equal. Then we have the following remark.
\begin{remark} \label{rem1}
Let the normalized confusion matrix for subclass and/or class classification tasks follow either type (I) or (II). Then, the capacity of a strong symmetric channel or weakly symmetric channel provides us the maximum number of label bits the teacher can transfer to the student via the class and/or subclass labels. Thus, all the preceding results of Theorems \ref{th1} and \ref{th2} will hold if we substitute the following capacity with the capacity of Q-ary Symmetric Channel.
\begin{flalign} \nonumber
 C = \log (N) - H (\text{row of normalized confusion matrix})
\end{flalign}
where $N$ denotes the number of classes or subclasses. Note that the capacity is achieved by a uniform distribution on the input alphabet. 
\end{remark}
In this paper, we assume that the teacher provides a noisy version of one-hot encoded class or subclass labels to the student network. We can also gain from soft information as well as hard information if we utilize a higher temperature in knowledge distillation, which can increase the number of label bits per training sample. 

\section{Experimental Setup} \label{sec4}
In this section, we describe the experimental setup that will be used throughout the study. Our objective is to compress a large-scale teacher with high accuracy into a smaller student that is more appropriate for deployment. To this aim, we rely on the distillation of subclass knowledge through SKD.

\subsection{Minimalist HIStopathology (MHIST) dataset}
In this paper, we focus on the clinically-important classification problem between Hyperplastic Polyps (HPs) and Sessile Serrated Adenomas (SSAs) \cite{ssa, ssa_lesion, hp} on MHIST dataset \cite{mhist}. HPs are generally benign, but SSAs are precancerous lesions that, if left untreated, might progress to malignancy and require more frequent follow-up exams \cite{colonpolyps}. Pathologically, HPs have superficial serrations in the upper parts of the crypt, whereas in SSAs, serrations extend deeper into the crypt and the crypts are broad-based and may have a boot shape \cite{ssa2} (Fig. \ref{fig3}). In the annotation phase of MHIST dataset, seven practicing board-certified gastrointestinal pathologists separately and independently classified each of the $3,152$ images as either HP or SSA \cite{mhist}. The gold standard label was then allocated to each image on the basis of the majority vote among the seven labels, a common choice in literature \cite{breastmr, colorectaldeep}. The training set contains $2175$ examples, while the test set has $977$ images ($224 \times 224$ pixels). In addition, we use $5$-fold cross validation to tune the hyperparameters. 

\subsection{Subclass classification tasks}
In the MHIST dataset, each class can be partitioned into $4$ subgroups according to the discrete level of difficulty, which is determined by image-level annotator agreement: (I) very easy to predict ($7/7$ annotator agreement), (II) easy to predict ($6/7$ annotator agreement), (III) hard to predict ($5/7$ annotator agreement), and (IV) very hard to predict ($7/7$ annotator agreement). These clinician-provided annotations are used to define subclasses based on the annotation label's variability in a curriculum style of learning \cite{wei2021learn}. Then, we take the following classification tasks:
\begin{enumerate}
     \item \textbf{ClassLevel-}$\mathbf{11}$\textbf{:} We do not have subclasses. The task is class binary classification.
     
     \item \textbf{SubclassLevel-}$\mathbf{21}$\textbf{:} SSA has $2$ subclasses: (I) very easy and easy to predict, (II) hard and very hard to predict. HP has a one subclass, which is the class itself.
     
     \item \textbf{SubclassLevel-}$\mathbf{41}$\textbf{:} Each level of difficulty is a subclass for SSA. HP has a one subclass, which is the class itself.
     
     \item \textbf{SubclassLevel-}$\mathbf{22}$\textbf{:} Both SSA and HP has $2$ subclasses: (I) very easy and easy to predict, (II) hard and very hard to predict.
     
     \item \textbf{SubclassLevel-}$\mathbf{12}$\textbf{:} SSA has a single subclass, which is the class itself. HP has $2$ subclasses: (I) very easy and easy to predict, (II) hard and very hard to predict.
     
     \item \textbf{SubclassLevel-}$\mathbf{14}$\textbf{:} SSA has a one subclass, which is the class itself. Each level of difficulty is a subclass for HP.
\end{enumerate}

\begin{figure} 
    \centering
    \includegraphics[width= 1\linewidth]{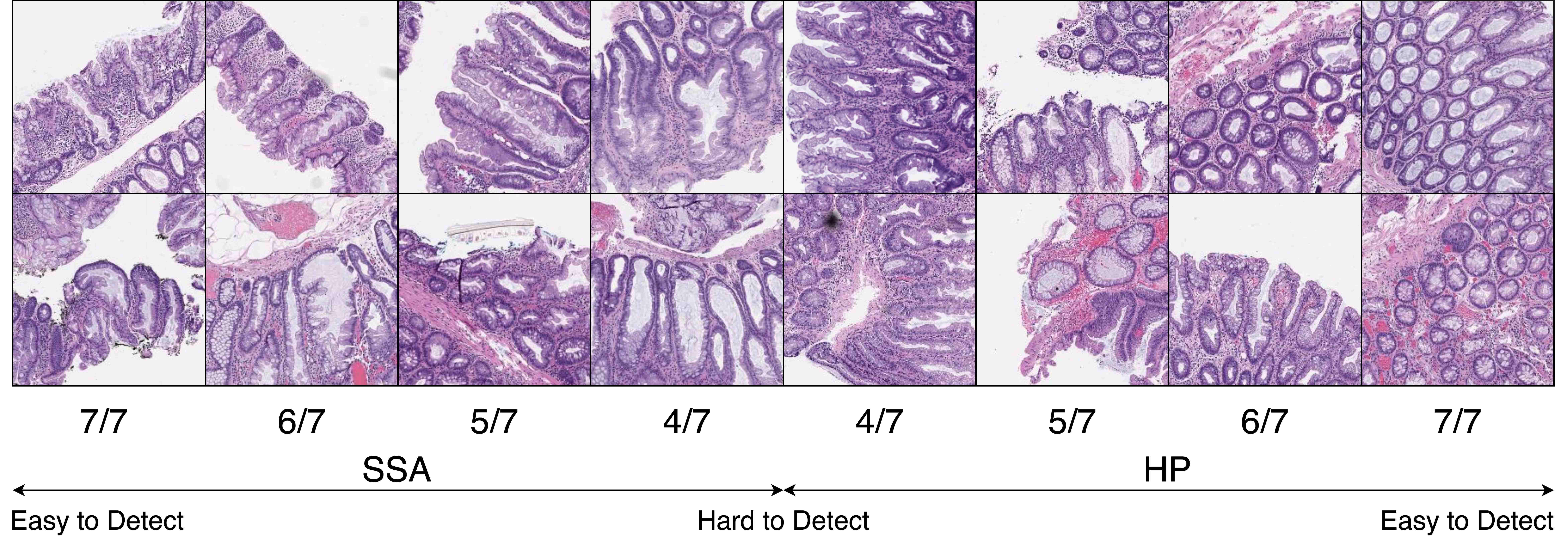}
    \caption{Example images for MHIST dataset \cite{mhist}.}
    \label{fig3}
\end{figure}

\subsection{Teacher and student models}  
We trained the ResNet50V2 \cite{resnet} network to be used as a teacher for our classification task. Specifically, we trained the teacher for $50$ epochs using data augmentation with Adam optimizer \cite{adam}, batch size of $32$, learning rate of $1\times10^{-4}$, and weight decay of $5\times10^{-4}$. Dropout is also implemented with probability $0.2$ to the final softmax matrix. For the student's models, we used NASNetMobile \cite{nasnetmobile} network with data augmentation, a minibatch size of $32$ and Adam optimizer \cite{adam}. For the baseline student that is trained from scratch without class or subclass knowledge distillation, we ran a grid search over a number of epochs ($20, \mathbf{30}, 50, 100, 150$), learning rate ($0.1$, $0.01$, $\mathbf{0.001}$, $0.0001$), and weight decay ($\mathbf{0.0005}$, $0.005$, $0.05$). Values with the highest accuracy are highlighted in bold. In addition, for all student models that distillation is used, we tune the distillation hyperparameters, such as temperature $\tau$ ($1$, $2$, $\mathbf{5}$ \textbf{(tasks} $\mathbf{2}$, $\mathbf{3}$, $\mathbf{4}$, $\mathbf{5}$, \textbf{and} $\mathbf{6}$\textbf{)}, $16$, $32$, $64$, $\mathbf{128}$ \textbf{(task} $\mathbf{1}$\textbf{)} and task balance $\lambda$ ($0.0$, $0.25$, $\mathbf{0.45}$ \textbf{(tasks} $\mathbf{1}$, $\mathbf{2}$, $\mathbf{3}$, $\mathbf{5}$, \textbf{and} $\mathbf{6}$\textbf{)}, $\mathbf{0.75}$ \textbf{(task} $\mathbf{4}$\textbf{)}). All the models in this paper are trained on NVIDIA Tesla V100-SXM2-16GB \cite{teslav100} using the TensorFlow framework.

\section{Experimental Results} \label{sec5}
In this section, we conduct experiments on the principles described in the preceding sections using the MHIST dataset. The MHIST dataset is a skewed dataset in which the HP has more samples than the SSA ($2162$ samples for HP, $990$ samples for SSA). When the dataset is unbalanced, it is important to find an equal balance between precision and recall. So, we use the F-1 score to compare the networks' performance in our experiment. In all experiments, teachers and students were trained on subclass labels but evaluated on class labels. We started by training the teacher network on MHIST for the CL-$11$ task and obtained an F1-score of $85.57\%$ (averaged over $60$ runs, as all the results in this section). We trained the student network without knowledge distillation in the task with no subclasses as a baseline and observed a gap of $2.66\%$ between the teacher and the student performance. Then, we investigate how the distillation of knowledge can help the student perform better. We obtained an $0.65\%$ increase in F1-score for the student trained with conventional KD, compared to the baseline student. Following that, we distilled subclass knowledge from teachers that had been trained for the subclass classification tasks in the previous section and find a class F1-score improvement compared to the baseline student (Table \ref{table1}). Specifically, the student learns with SKD in SL-$12$ task, achieves an F1-score of $85.05\%$, an improvement of $2.14\%$ and $1.49\%$ over the student that trains without and with conventional KD, respectively. These results show that the SKD framework can compress a large-scale teacher into a smaller and less computational complexity student without severely sacrificing its performance. To be more precise, we measure the computational cost of teacher and student networks trained with SKD in the SL-$12$ task using the number of multiply-adds (FLOPs) as described in \cite{xie2017aggregated}. As shown in Table \ref{table2}, the computational complexity of the student network is $6x$ less than that of the teacher, while its inference time  is roughly equal to the teacher's inference time.

\begin{table}[htb] 
\centering
\begin{tabular}{c c c}
\hline
{Task} & {Method} & {Binary Class F1-score(\%)}\\
\hline

\multirow{3}{*}{CL-$11$} & {Teacher} & $85.57\pm0.81$\\ 
& {Student (baseline)} & $\mathbf{82.91\pm1.02}$\\
& {Student + KD} & $83.56\pm1.64$\\
\hline

\multirow{3}{*}{SL-$21$} & {Teacher} & $85.78\pm0.99$\\ 
& {Student} & $83.47\pm1.84$\\
& {Student + SKD} & $84.52\pm1.54$\\
\hline

\multirow{3}{*}{SL-$41$} & {Teacher} & $85.56\pm0.93$\\ 
& {Student} & $83.64\pm1.55$\\
& {Student + SKD} & $84.32\pm1.18$\\
\hline

\multirow{3}{*}{SL-$22$} & {Teacher} & $85.75\pm0.94$\\ 
& {Student} & $83.89\pm1.48$\\
& {Student + SKD} & $84.94\pm1.34$\\
\hline

\multirow{3}{*}{SL-$12$} & {Teacher} & $85.97\pm0.87$\\ 
& {Student} & $84.16\pm1.75$\\
& {Student + SKD} & $\mathbf{85.05\pm1.48}$\\
\hline

\multirow{3}{*}{SL-$14$} & {Teacher} & $85.40\pm0.99$\\ 
& {Student} & $83.42\pm1.46$\\
& {Student + SKD} & $84.28\pm1.58$\\
\hline
\end{tabular}
\caption{Results of test F-1 score in different tasks. The baseline corresponds to training the student without distillation. The distillation results correspond to training the student to match its teacher’s class and subclass predictions with KD and SKD, respectively. (SL: SubclassLevel, CL: ClassLevel)}
\label{table1}
\end{table}

\begin{table}[htb]
\centering
\begin{tabular}{c c c c}
\hline
{Model} & {FLOPs} & {Inference time (ms)} & Parameters\\
\hline
{Teacher} & $6.970$G & $5.29$ & $20.57$M\\
{Student} & $\mathbf{1.136}$\textbf{G} &  $6.81$ & $\mathbf{2.21}$ \textbf{M}\\
\hline
\end{tabular}
\caption{Results of computational cost (G-FLOPs), interference time, and the trainable parameters for the teacher and the student networks trained in the SL-$12$ task.}
\label{table2}
\end{table}

\begin{table}[htb]
\centering
\begin{tabular}{c c}
\hline
{Task} &  {Total label information bits/sample}\\
\hline
{ClassLevel-$11$} & $0.8363$ \\
{SubclassLevel-$21$} & $1.1664$ \\
{SubclassLevel-$41$} & $1.1684$ \\
{SubclassLevel-$22$} & $1.2758$ \\
{SubclassLevel-$12$} & $\mathbf{1.3019}$ \\
{SubclassLevel-$14$} & $1.1556$ \\
\hline
\end{tabular}
\caption{The upper bound on the number of label bits per sample that the teacher can provide in different tasks. Total label bits is the summation of class and subclass label bits per sample.}
\label{table3}
\end{table}
We also measured the label bits that the teacher can transfer to the student in order to show how SKD can benefit from subclass knowledge to help the student perform better.  The results in Table \ref{table3} show that the student, trained on the SKD framework, can gain $0.4656$ extra label bits per sample from hidden subclass knowledge. This difference in the number of label bits explains the $2.14\%$ F1-score gap between the students trained with and without subclass distillation in the binary classification task. The details of measuring the label bits can be found in the supplementary material.

\begin{figure} 
    \centering
    \includegraphics[width=0.89\linewidth]{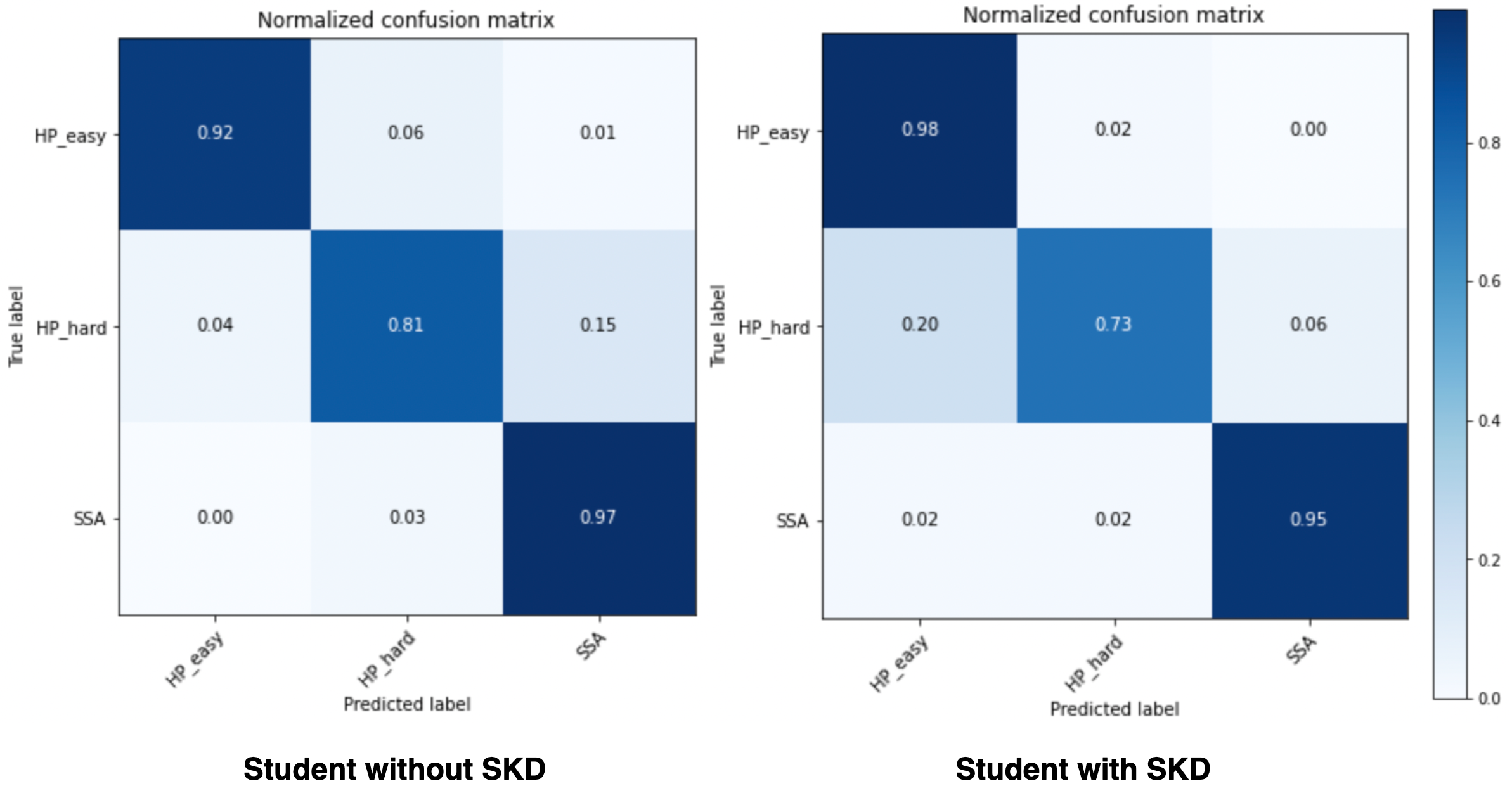}
    \caption{Normalized confusion matrix on the test set for student trained in the SL-$12$ task with and without SKD.}
    \label{fig4}
\end{figure}

In the SL-$21$ task, the HP-easy and HP-hard subclasses have similar fine-grained feature that the teacher can learn them and transfer their dark knowledge (high $\tau$) to the student via SKD. When the true label is HP-hard, this knowledge may affect on the student's probability values for incorrect subclasses. As shown in Figure \ref{fig4}, when the student is trained with subclass distillation, the HP-hard's error probability is more biased towards the HP-easy than the SSA. Therefore, this can boost the class- level performance.

\section{Conclusion and Future Works} \label{sec6}
In this paper, we propose Subclass Knowledge Distillation for classification where information on existing subclasses is available and taken into consideration. We show that we can improve the performance of the lightweight student by transferring hidden subclass knowledge, the additional meaningful information that helps the teacher to learn more fine-grained features. We also mathematically measure this extra knowledge using channel capacity concepts. Finally, the SKD was evaluated in the clinical binary classification and showed that it can benefit from subclass knowledge to boost student performance. Future works could be theoretically, such as investigating the proposed upper bound's tightness, or experimentally, like evaluating SKD on more datasets.
\bibliography{SubclassKD_AAAI22.bib}

\begin{thebibliography}{45}
\providecommand{\natexlab}[1]{#1}

\bibitem[{Abdeljawad et~al.(2015)Abdeljawad, Vemulapalli, Kahi, Cummings,
  Snover, and Rex}]{ssa_lesion}
Abdeljawad, K.; Vemulapalli, K.~C.; Kahi, C.~J.; Cummings, O.~W.; Snover,
  D.~C.; and Rex, D.~K. 2015.
\newblock Sessile serrated polyp prevalence determined by a colonoscopist with
  a high lesion detection rate and an experienced pathologist.
\newblock \emph{Gastrointestinal Endoscopy}, 81: 517--524.

\bibitem[{{American Cancer Society}(2021)}]{colonpolyps}
{American Cancer Society}. 2021.
\newblock Understanding your pathology report: Colon polyps (sessile or
  traditional serrated adenomas).
\newblock
  \url{https://www.cancer.org/treatment/understanding-your-diagnosis/tests/understanding-your-pathology-report/colon-pathology/colon-polyps-sessile-or-traditional-serrated-adenomas.html}.
\newblock Accessed: 2021-01-06.

\bibitem[{Ba and Caruana(2014)}]{caruana}
Ba, L.~J.; and Caruana, R. 2014.
\newblock {Do deep nets really need to be deep?}
\newblock In \emph{Advances in Neural Information Processing Systems
  (NeurIPS)}, 2654--2662.

\bibitem[{Bhattacharjee et~al.(2001)Bhattacharjee, Richards, Staunton, Li,
  Monti, Vasa, Ladd, Beheshti, Bueno, Gillette
  et~al.}]{bhattacharjee2001classification}
Bhattacharjee, A.; Richards, W.~G.; Staunton, J.; Li, C.; Monti, S.; Vasa, P.;
  Ladd, C.; Beheshti, J.; Bueno, R.; Gillette, M.; et~al. 2001.
\newblock Classification of human lung carcinomas by mRNA expression profiling
  reveals distinct adenocarcinoma subclasses.
\newblock \emph{Proceedings of the National Academy of Sciences}, 98(24):
  13790--13795.

\bibitem[{Bilen and Vedaldi(2016)}]{bilen2016weakly}
Bilen, H.; and Vedaldi, A. 2016.
\newblock Weakly supervised deep detection networks.
\newblock In \emph{Proceedings of the IEEE/CVF Conference on Computer Vision
  and Pattern Recognition (CVPR)}, 2846--2854.

\bibitem[{Bulten et~al.(2020)Bulten, Pinckaers, van Boven, Vink, de~Bel, van
  Ginneken, van~der Laak, Hulsbergen-van~de Kaa, and Litjens}]{prostatecancer}
Bulten, W.; Pinckaers, H.; van Boven, H.; Vink, R.; de~Bel, T.; van Ginneken,
  B.; van~der Laak, J.; Hulsbergen-van~de Kaa, C.; and Litjens, G. 2020.
\newblock Automated deep-learning system for Gleason grading of prostate cancer
  using biopsies: A diagnostic study.
\newblock \emph{The Lancet Oncology}, 21: 233--241.

\bibitem[{Chen et~al.(2018)Chen, Ding, Chin, and
  Marculescu}]{chen2018understanding}
Chen, Z.; Ding, R.; Chin, T.-W.; and Marculescu, D. 2018.
\newblock Understanding the impact of label granularity on cnn-based image
  classification.
\newblock In \emph{2018 IEEE international conference on data mining workshops
  (ICDMW)}, 895--904. IEEE.

\bibitem[{Cover and Thomas(2006)}]{cover}
Cover, T.~M.; and Thomas, J.~A., eds. 2006.
\newblock \emph{Elements of Information Theory}.
\newblock John Wiley \& Sons, 2nd edition.

\bibitem[{Deng(2012)}]{mnist}
Deng, L. 2012.
\newblock The mnist database of handwritten digit images for machine learning
  research.
\newblock \emph{IEEE Signal Processing Magazine}, 29(6): 141--142.

\bibitem[{Dua and Graff(2017)}]{Dua:2019}
Dua, D.; and Graff, C. 2017.
\newblock {UCI} Machine Learning Repository.

\bibitem[{Esteva et~al.(2017)Esteva, Kuprel, Novoa, Ko, Swetter, Blau, and
  Thrun}]{esteva2017dermatologist}
Esteva, A.; Kuprel, B.; Novoa, R.~A.; Ko, J.; Swetter, S.~M.; Blau, H.~M.; and
  Thrun, S. 2017.
\newblock Dermatologist-level classification of skin cancer with deep neural
  networks.
\newblock \emph{nature}, 542(7639): 115--118.

\bibitem[{Farris et~al.(2008)Farris, Misdraji, Srivastava, Muzikansky,
  Deshpande, Lauwers, and Mino-Kenudson}]{ssa}
Farris, A.~B.; Misdraji, J.; Srivastava, A.; Muzikansky, A.; Deshpande, V.;
  Lauwers, G.~Y.; and Mino-Kenudson, M. 2008.
\newblock Sessile serrated adenoma: challenging discrimination from other
  serrated colonic polyps.
\newblock \emph{The American Journal of Surgical Pathology}, 32: 30--35.

\bibitem[{Gurudu et~al.(2010)Gurudu, Heigh, Petris, Heigh, Leighton, Pasha,
  Malagon, and Das}]{ssa2}
Gurudu, S.~R.; Heigh, R.~I.; Petris, G.~D.; Heigh, E.~G.; Leighton, J.~A.;
  Pasha, S.~F.; Malagon, I.~B.; and Das, A. 2010.
\newblock Sessile serrated adenomas: Demographic, endoscopic and pathological
  characteristics.
\newblock \emph{World Journal of Gastroenterology}, 16: 3402--3405.

\bibitem[{He et~al.(2016)He, Zhang, Ren, and Sun}]{resnet}
He, K.; Zhang, X.; Ren, S.; and Sun, J. 2016.
\newblock Deep Residual Learning for Image Recognition.
\newblock In \emph{Proceedings of the IEEE/CVF Conference on Computer Vision
  and Pattern Recognition (CVPR)}, 770--778.

\bibitem[{Hekler et~al.(2019)Hekler, Utikal, Enk, Berking, Klode, Schadendorf,
  Jansen, Franklin, Holland-Letz, Krahl, {von Kalle}, Fröhling, and
  Brinker}]{HEKLER201979}
Hekler, A.; Utikal, J.~S.; Enk, A.~H.; Berking, C.; Klode, J.; Schadendorf, D.;
  Jansen, P.; Franklin, C.; Holland-Letz, T.; Krahl, D.; {von Kalle}, C.;
  Fröhling, S.; and Brinker, T.~J. 2019.
\newblock Pathologist-level classification of histopathological melanoma images
  with deep neural networks.
\newblock \emph{European Journal of Cancer}, 115: 79--83.

\bibitem[{Hinton, Vinyals, and Dean(2015)}]{kd}
Hinton, G.; Vinyals, O.; and Dean, J. 2015.
\newblock Distilling the knowledge in a neural network.
\newblock In \emph{NIPS Deep Learning and Representation Learning Workshop}.

\bibitem[{Huang and Wang(2017)}]{nst}
Huang, Z.; and Wang, N. 2017.
\newblock Like What You Like: Knowledge Distill via Neuron Selectivity
  Transfer.
\newblock arXiv:1707.01219.

\bibitem[{Karabatak(2015)}]{karabatak2015new}
Karabatak, M. 2015.
\newblock A new classifier for breast cancer detection based on Na{\"\i}ve
  Bayesian.
\newblock \emph{Measurement}, 72: 32--36.

\bibitem[{Khalid et~al.(2009)Khalid, Radaideh, Cummings, O'Brien, Goldblum, and
  Rex}]{hp}
Khalid, O.; Radaideh, S.; Cummings, O.~W.; O'Brien, M.~J.; Goldblum, J.~R.; and
  Rex, D.~K. 2009.
\newblock Reinterpretation of histology of proximal colon polyps called
  hyperplastic in 2001.
\newblock \emph{World Journal of Gastroenterology}, 15: 3767--3770.

\bibitem[{Kingma and Ba(2014)}]{adam}
Kingma, D.; and Ba, J. 2014.
\newblock Adam: A method for stochastic optimization.
\newblock In \emph{International Conference on Learning Representations
  (ICLR)}.

\bibitem[{Korbar et~al.(2017)Korbar, Olofson, Miraflor, Nicka, Suriawinata,
  Torresani, Suriawinata, and Hassanpour}]{colorectaldeep}
Korbar, B.; Olofson, A.~M.; Miraflor, A.~P.; Nicka, K.~M.; Suriawinata, M.~A.;
  Torresani, L.; Suriawinata, A.~A.; and Hassanpour, S. 2017.
\newblock Deep learning for classification of colorectal polyps on whole-slide
  images.
\newblock \emph{Journal of Pathology Informatics}, 8.

\bibitem[{Krizhevsky and Hinton(2009)}]{cifar10}
Krizhevsky, A.; and Hinton, G. 2009.
\newblock {Learning multiple layers of features from tiny images.}
\newblock Technical report, Dept.\ of Computer Science, Toronto Univ.

\bibitem[{Liu et~al.(2019)Liu, Cao, Li, Yuan, Hu, Li, and Duan}]{liu}
Liu, Y.; Cao, J.; Li, B.; Yuan, C.; Hu, W.; Li, Y.; and Duan, Y. 2019.
\newblock Knowledge Distillation via Instance Relationship Graph.
\newblock In \emph{Proceedings of the IEEE/CVF International Conference on
  Computer Vision (ICCV)}, 7096--71046.

\bibitem[{Mlynarski et~al.(2019)Mlynarski, Delingette, Criminisi, and
  Ayache}]{mlynarski2019deep}
Mlynarski, P.; Delingette, H.; Criminisi, A.; and Ayache, N. 2019.
\newblock Deep learning with mixed supervision for brain tumor segmentation.
\newblock \emph{Journal of Medical Imaging}, 6(3): 034002.

\bibitem[{Müller, Kornblith, and Hinton(2020)}]{skd}
Müller, R.; Kornblith, S.; and Hinton, G. 2020.
\newblock Subclass Distillation.
\newblock arXiv:2002.03936.

\bibitem[{{NVIDIA}(2021)}]{teslav100}
{NVIDIA}. 2021.
\newblock NVIDIA V100 tensor core GPU.
\newblock \url{https://www.nvidia.com/en-us/data-center/v100/}.

\bibitem[{Oakden-Rayner et~al.(2020)Oakden-Rayner, Dunnmon, Carneiro, and
  R{\'e}}]{oakden2020hidden}
Oakden-Rayner, L.; Dunnmon, J.; Carneiro, G.; and R{\'e}, C. 2020.
\newblock Hidden stratification causes clinically meaningful failures in
  machine learning for medical imaging.
\newblock In \emph{Proceedings of the ACM conference on health, inference, and
  learning}, 151--159.

\bibitem[{Park et~al.(2019)Park, Kim, Lu, and Cho}]{relationalkd}
Park, W.; Kim, D.; Lu, Y.; and Cho, M. 2019.
\newblock Relational knowledge distillation.
\newblock In \emph{Proceedings of the IEEE/CVF Conference on Computer Vision
  and Pattern Recognition (CVPR)}, 3967--3976.

\bibitem[{Pawar and Patil(2013)}]{pawar2013breast}
Pawar, P.~S.; and Patil, D.~R. 2013.
\newblock Breast cancer detection using neural network models.
\newblock In \emph{2013 International Conference on Communication Systems and
  Network Technologies}, 568--572. IEEE.

\bibitem[{Peng et~al.(2019)Peng, Jin, Liu, Li, Wu, Liu, Zhou, and Zhang}]{peng}
Peng, B.; Jin, X.; Liu, J.; Li, D.; Wu, Y.; Liu, Y.; Zhou, S.; and Zhang, Z.
  2019.
\newblock Correlation Congruence for Knowledge Distillation.
\newblock In \emph{Proceedings of the IEEE/CVF International Conference on
  Computer Vision (ICCV)}, 5007--5016.

\bibitem[{Romero et~al.(2015)Romero, Ballas, Kahou, Chassang, Gatta, and
  Bengio}]{fitnet}
Romero, A.; Ballas, N.; Kahou, S.~E.; Chassang, A.; Gatta, C.; and Bengio, Y.
  2015.
\newblock FitNets: Hints for Thin Deep Nets.
\newblock arXiv:1412.6550.

\bibitem[{Siegel et~al.(2021)Siegel, Miller, Fuchs, and
  Jemal}]{siegel2021cancer}
Siegel, R.~L.; Miller, K.~D.; Fuchs, H.~E.; and Jemal, A. 2021.
\newblock Cancer statistics, 2021.
\newblock \emph{CA: a cancer journal for clinicians}, 71(1): 7--33.

\bibitem[{Sohoni et~al.(2020)Sohoni, Dunnmon, Angus, Gu, and
  R{\'e}}]{sohoni2020no}
Sohoni, N.~S.; Dunnmon, J.~A.; Angus, G.; Gu, A.; and R{\'e}, C. 2020.
\newblock No subclass left behind: Fine-grained robustness in coarse-grained
  classification problems.
\newblock \emph{arXiv preprint arXiv:2011.12945}.

\bibitem[{Sun, Wong, and Kamel(2011)}]{imbalance}
Sun, Y.; Wong, A. K.~C.; and Kamel, M.~S. 2011.
\newblock Classification of imbalanced data: a review.
\newblock \emph{International Journal of Pattern Recognition and Artificial
  Intelligence}, 23.

\bibitem[{Tarvainen and Valpola(2017)}]{tarvainen}
Tarvainen, A.; and Valpola, H. 2017.
\newblock Mean teachers are better role models: Weight-averaged consistency
  targets improve semi-supervised deep learning results.
\newblock In \emph{Advances in Neural Information Processing Systems
  (NeurIPS)}, 1195--1204.

\bibitem[{Tung and Mori(2019)}]{tung}
Tung, F.; and Mori, G. 2019.
\newblock Similarity-preserving knowledge distillation.
\newblock In \emph{Proceedings of the IEEE/CVF International Conference on
  Computer Vision (ICCV)}, 1365--1374.

\bibitem[{Tzelepi, Passalis, and Tefas(2021{\natexlab{a}})}]{oskd}
Tzelepi, M.; Passalis, N.; and Tefas, A. 2021{\natexlab{a}}.
\newblock Efficient Online Subclass Knowledge Distillation for Image
  Classification.
\newblock In \emph{Proceedings of International Conference on Pattern
  Recognition (ICPR)}, 1007--1014.

\bibitem[{Tzelepi, Passalis, and Tefas(2021{\natexlab{b}})}]{oskd2}
Tzelepi, M.; Passalis, N.; and Tefas, A. 2021{\natexlab{b}}.
\newblock Online Subclass Knowledge Distillation.
\newblock \emph{Expert Systems with Applications}, 181: 115132.

\bibitem[{Wei et~al.(2021{\natexlab{a}})Wei, Suriawinata, Ren, Liu, Lisovsky,
  Vaickus, Brown, Baker, Nasir-Moin, Tomita et~al.}]{wei2021learn}
Wei, J.; Suriawinata, A.; Ren, B.; Liu, X.; Lisovsky, M.; Vaickus, L.; Brown,
  C.; Baker, M.; Nasir-Moin, M.; Tomita, N.; et~al. 2021{\natexlab{a}}.
\newblock Learn like a pathologist: curriculum learning by annotator agreement
  for histopathology image classification.
\newblock In \emph{Proceedings of the IEEE/CVF Winter Conference on
  Applications of Computer Vision}, 2473--2483.

\bibitem[{Wei et~al.(2021{\natexlab{b}})Wei, Suriawinata, Ren, Liu, Lisovsky,
  Vaickus, Brown, Baker, Tomita, Torresani, Wei, and Hassanpour}]{mhist}
Wei, J.; Suriawinata, A.; Ren, B.; Liu, X.; Lisovsky, M.; Vaickus, L.; Brown,
  C.; Baker, M.; Tomita, N.; Torresani, L.; Wei, J.; and Hassanpour, S.
  2021{\natexlab{b}}.
\newblock A Petri Dish for Histopathology Image Analysis.
\newblock arXiv:2101.12355.

\bibitem[{Xie et~al.(2017)Xie, Girshick, Doll{\'a}r, Tu, and
  He}]{xie2017aggregated}
Xie, S.; Girshick, R.; Doll{\'a}r, P.; Tu, Z.; and He, K. 2017.
\newblock Aggregated residual transformations for deep neural networks.
\newblock In \emph{Proceedings of the IEEE/CVF Conference on Computer Vision
  and Pattern Recognition (CVPR)}, 1492--1500.

\bibitem[{Yim et~al.(2017)Yim, Joo, Bae, and Kim}]{innerproduct}
Yim, J.; Joo, D.; Bae, J.; and Kim, J. 2017.
\newblock A gift from knowledge distillation: Fast optimization, network
  minimization and transfer learning.
\newblock In \emph{Proceedings of the IEEE/CVF Conference on Computer Vision
  and Pattern Recognition (CVPR)}, 7130--7138.

\bibitem[{Zagoruyko and Komodakis(2017)}]{attention}
Zagoruyko, S.; and Komodakis, N. 2017.
\newblock Paying more attention to attention: Improving the performance of
  convolutional neural networks via attention transfer.
\newblock In \emph{International Conference on Learning Representations
  (ICLR)}.

\bibitem[{Zhou et~al.(2019)Zhou, Luo, Dou, Chen, Chen, Li, Jiang, and
  Heng}]{breastmr}
Zhou, J.; Luo, L.-Y.; Dou, Q.; Chen, H.; Chen, C.; Li, G.-J.; Jiang, Z.-F.; and
  Heng, P.-A. 2019.
\newblock Weakly supervised 3d deep learning for breast cancer classification
  and localization of the lesions in MR images.
\newblock \emph{Journal of Magnetic Resonance Imaging}, 50: 1144--1151.

\bibitem[{Zoph et~al.(2018)Zoph, Vasudevan, Shlens, and Le}]{nasnetmobile}
Zoph, B.; Vasudevan, V.; Shlens, J.; and Le, Q.~V. 2018.
\newblock Learning Transferable Architectures for Scalable Image Recognition.
\newblock In \emph{Proceedings of the IEEE/CVF Conference on Computer Vision
  and Pattern Recognition (CVPR)}, 8697--8710.

\end{thebibliography}

\onecolumn
\def\x{{\mathbf x}}
\def\L{{\cal L}}

\begin{center}
\textbf{\LARGE Supplementary Materials}
\end{center}

\appendix

\section{Analyzing the Number of Label Information Bits per Training Sample (General Case)}

This subsection analyzes the total number of label information bits that the teacher can transfer to the student in a multiclass classification task with a different number of subclasses per class (Figure \ref{fig5}) using the Subclass Knowledge Distillation (SKD) framework. This general case is described in the following theorem.

\begin{theorem} \label{th3}
Suppose that the dataset has $N_{C}$ classes, and class $i$ contains known and available $N_{C_{i}}$ subclasses for $i$ in $\{1, 2, ..., N_{C}\}$. Let the teacher network predict each class $i$ (each subclass of class $i$, resp.) correctly with a probability of $P_{C}$ ($P_{C_{i}}$, resp.) during the training phase, while the remaining errors are distributed equally over the remaining $(N_{C}-1)$ classes ($(N_{C_{i}}-1)$ subclasses, resp.). Then, the teacher's ability to provide the student with label information is bounded above by
\begin{flalign} \label{eq7}
 (\log N_{C} + P_{C}\log P_{C} + (1-P_{C}) \log  \frac{1-P_{C}}{N_{C}-1}) +  \sum_{k = 1}^{N_{C}} \frac{\sum_{j = 1}^{N_{C_{k}}} N_{S_{kj}}}{\sum_{i = 1}^{N_{C}} \sum_{j = 1}^{N_{C_{i}}} N_{S_{ij}}} (log N_{C_{k}} + P_{C_{k}} log P_{C_{k}} + (1-P_{C_{k}}) log \frac{1-P_{C_{k}}}{N_{C_{k}}-1})
\end{flalign}
where $N_{S_{ij}}$ indicates the number of training sample for subclass $j$ of class $i$.  All information quantities are represented in bits, and the $\log$ function is to base $2$.
\end{theorem}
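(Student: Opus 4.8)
The plan is to extend the reasoning of Theorems \ref{th1} and \ref{th2} by decoupling the class-level channel from the family of per-class subclass channels, bounding each by a Q-ary Symmetric Channel capacity, and then combining the contributions through the empirical sample frequencies. First I would model the class-level classification: since the teacher predicts each class correctly with probability $P_{C}$ and spreads its residual error mass uniformly over the other $N_{C}-1$ classes, the normalized confusion matrix is exactly the transition matrix of a Q-ary Symmetric Channel with $N = N_{C}$ and crossover parameter $P_{C}$. As in the proof of Theorem \ref{th2}, the \emph{actual} number of class label bits the teacher conveys equals the mutual information $I(Y;\hat{Y})$ evaluated at the empirical class distribution, which is at most the channel capacity $\log N_{C} - H\big(P_{C}, \tfrac{1-P_{C}}{N_{C}-1}, \dots, \tfrac{1-P_{C}}{N_{C}-1}\big) = \log N_{C} + P_{C}\log P_{C} + (1-P_{C})\log\tfrac{1-P_{C}}{N_{C}-1}$, with equality iff the class frequencies are uniform. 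This gives the first summand of $(\ref{eq7})$.

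Next I would treat the subclasses of each class $k$ separately. Conditioned on a training sample lying in class $k$, the teacher's subclass prediction defines a Q-ary Symmetric Channel with $N = N_{C_{k}}$ and crossover $P_{C_{k}}$ (for a class with a single subclass this channel is degenerate and transmits $0$ bits, consistent with the convention $\log 1 = 0$ and $(1-P_{C_{k}})\log\tfrac{1-P_{C_{k}}}{N_{C_{k}}-1}\to 0$). The number of subclass label bits carried for a sample in class $k$ is the mutual information of this channel under the empirical distribution of subclasses within class $k$, hence bounded above by its capacity $C_{k} = \log N_{C_{k}} + P_{C_{k}}\log P_{C_{k}} + (1-P_{C_{k}})\log\tfrac{1-P_{C_{k}}}{N_{C_{k}}-1}$. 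To obtain the \emph{average} number of subclass label bits per training sample, I would take the expectation over which class a sample belongs to: a sample lies in class $k$ with empirical probability $\pi_{k} = \big(\sum_{j=1}^{N_{C_{k}}} N_{S_{kj}}\big)\big/\big(\sum_{i=1}^{N_{C}}\sum_{j=1}^{N_{C_{i}}} N_{S_{ij}}\big)$, so the average subclass label bits equal $\sum_{k}\pi_{k} I_{k} \le \sum_{k}\pi_{k} C_{k}$, which is precisely the second summand of $(\ref{eq7})$. Summing the class and subclass contributions then completes the bound.

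The main obstacle I anticipate is the bookkeeping in the combining step: justifying that the per-sample total genuinely decomposes as ``class bits $+$ frequency-weighted subclass bits,'' that the correct weights are the sample fractions $\pi_{k}$ (proportional to $\sum_{j} N_{S_{kj}}$) rather than, say, uniform over classes as the degenerate balanced case of Theorem \ref{th1} might suggest, and that replacing each empirical mutual information by the corresponding Q-ary Symmetric capacity is simultaneously legitimate for the class channel and all $N_{C}$ subclass channels — so the overall bound is attained only when every one of these empirical input distributions is uniform. I would also state the degenerate-channel convention explicitly so that the formula remains well defined whenever some $N_{C_{k}} = 1$ (the normal-class situation already highlighted in Theorem \ref{th2}).
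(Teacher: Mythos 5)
Your proposal is correct and follows essentially the same route as the paper's own proof: model the class-level confusion matrix as a Q-ary Symmetric Channel whose capacity (achieved at the uniform input) upper-bounds the class label bits, bound the per-class subclass channels the same way, weight them by the empirical sample fractions $\pi_k = \bigl(\sum_j N_{S_{kj}}\bigr)/\bigl(\sum_i\sum_j N_{S_{ij}}\bigr)$, and add the two contributions. Your explicit handling of the degenerate $N_{C_k}=1$ case and of when the bound is tight is a small but welcome refinement over the paper's presentation, not a different argument.
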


\begin{proof}
As long as the teacher predicts each class correctly with a probability of $P_{C}$ and the remaining error probabilities are distributed uniformly across the remaining classes, the class-level normalized confusion matrix for the training set will follow the structural pattern of the Q-ary Symmetric Channel's transition matrix. Therefore, the information capacity of the Q-ary Symmetric Channel indicates the maximum number of label bits that a teacher can convey to a student via class labels. This capacity is achieved by a uniform distribution across the class label space and is given by
\begin{flalign} \label{eq8}
\log N_{C} - H(P_{C}, \frac{1-P_{C}}{N_{C}-1}, \frac{1-P_{C}}{N_{C}-1}, \ldots, \frac{1-P_{C}}{N_{C}-1}) = \log N_{C} + P_{C} \log P_{C} + (1-P_{C}) \log \frac{1-P_{C}}{N_{C}-1} 
\end{flalign}
where $H(x_{1},x_{2}, ..., x_{n}) = -\sum_{i=1}^{n}x_{i}\log x_{i}$ denotes the entropy function. Unless the relative frequencies of training samples across classes match the capacity-achieving distributions, the Q-ary Symmetric Channel's capacity will be an upper bound on the number of label information bits that the teacher can provide using class labels. In other words, if the relative frequencies of training samples over class labels converge to a uniform distribution, the upper bound will be tight and will approach the real label bits.\\
In parallel with the class labels, the Q-ary Symmetric Channel could also be a suitable model to analyze the subclass label bits for a given class $i$, because the subclass-level normalized confusion matrix for the training set of class $i$ follows the structural pattern of the Q-ary Symmetric Channel's transition matrix. Given that each subclass has a different number of training samples, the following weighted average can be used to further establish an upper bound on the number of subclass label bits per sample that the teacher can provide.  
\begin{flalign} \label{eq9}
&\frac{\sum_{j = 1}^{N_{C_{1}}} N_{S_{1j}}}{\sum_{i = 1}^{N_{C}} \sum_{j = 1}^{N_{C_{i}}} N_{S_{ij}}} (\log N_{C_{1}} - H(P_{C_{1}}, \frac{1-P_{C_{1}}}{N_{C_{1}}-1}, \frac{1-P_{C_{1}}}{N_{C_{1}}-1}, \ldots, \frac{1-P_{C_{1}}}{N_{C_{1}}-1})+ \nonumber\\
&\frac{\sum_{j = 1}^{N_{C_{2}}} N_{S_{2j}}}{\sum_{i = 1}^{N_{C}} \sum_{j = 1}^{N_{C_{i}}} N_{S_{ij}}} (\log N_{C_{2}} - H(P_{C_{2}}, \frac{1-P_{C_{2}}}{N_{C_{2}}-1}, \frac{1-P_{C_{2}}}{N_{C_{2}}-1}, \ldots, \frac{1-P_{C_{2}}}{N_{C_{2}}-1}))+ \ldots + \nonumber\\
&\frac{\sum_{j = 1}^{N_{C_{N_{C}}}} N_{S_{N_{C}j}}}{\sum_{i = 1}^{N_{C}} \sum_{j = 1}^{N_{C_{i}}} N_{S_{ij}}} (\log N_{C_{N_{C}}} - H(P_{C_{N_{C}}}, \frac{1-P_{C_{N_{C}}}}{N_{C_{N_{C}}}-1}, \frac{1-P_{C_{N_{C}}}}{N_{C_{N_{C}}}-1}, \ldots, \frac{1-P_{C_{N_{C}}}}{N_{C_{N_{C}}}-1})) = \nonumber \\
&\sum_{k = 1}^{N_{C}} \frac{\sum_{j = 1}^{N_{C_{k}}} N_{S_{kj}}}{\sum_{i = 1}^{N_{C}} \sum_{j = 1}^{N_{C_{i}}} N_{S_{ij}}} (\log N_{C_{k}} - H(P_{C_{k}}, \frac{1-P_{C_{k}}}{N_{C_{k}}-1}, \frac{1-P_{C_{k}}}{N_{C_{k}}-1}, \ldots, \frac{1-P_{C_{2}}}{N_{C_{2}}-1})) = \nonumber \\
& \sum_{k = 1}^{N_{C}} \frac{\sum_{j = 1}^{N_{C_{k}}} N_{S_{kj}}}{\sum_{i = 1}^{N_{C}} \sum_{j = 1}^{N_{C_{i}}} N_{S_{ij}}} (\log N_{C_{k}} + P_{C_{k}} \log P_{C_{k}} + (1-P_{C_{k}}) \log \frac{1-P_{C_{k}}}{N_{C_{k}}-1})
\end{flalign}
where $\frac{\sum_{j = 1}^{N_{C_{i}}} N_{S_{ij}}}{\sum_{i = 1}^{N_{C}} \sum_{j = 1}^{N_{C_{i}}} N_{S_{ij}}}$ is the weight applied to the subclass label bits of class $i$. Finally, the summation of class and subclass label information bits, i.e., equations \ref{eq8} and \ref{eq9} completes the proof of Theorem \ref{th3}.
\end{proof}
\begin{figure} [htp] 
    \centering
    \includegraphics[width=1\linewidth]{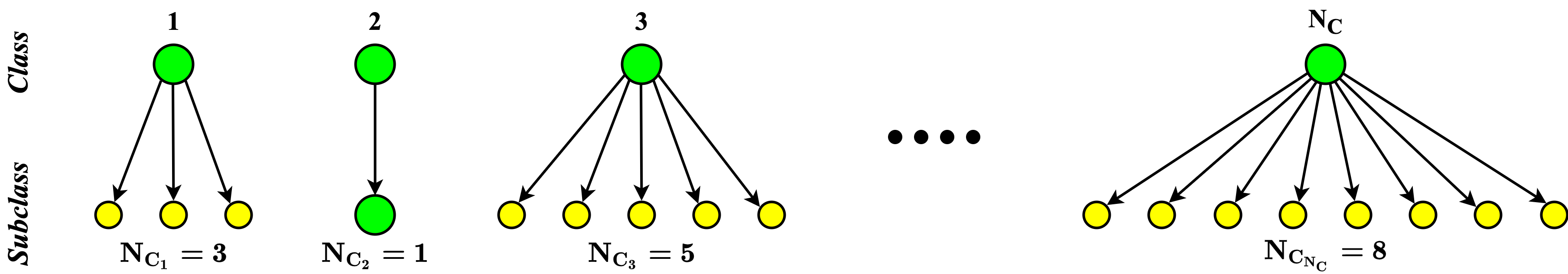}
    \caption{Example of a class hierarchy in a multiclass classification task with a different number of subclasses per class.}
    \label{fig5}
\end{figure}

It is important to note that Theorem \ref{th1} is generalizable as long as the normalized confusion matrix for the training set follows the structural pattern of the confusion matrix for Strong Symmetric Channel or Weakly Symmetric Channel. The following remark illustrates this.

\begin{remark} \label{rem2}
Let the class-level and/or subclass-level normalized confusion matrix for the training set follow either the structural pattern of the Strong Symmetric Channel's or Weakly Symmetric Channel's confusion matrix. Then, the capacity of a strong symmetric channel or a weakly symmetric channel tells us how many label bits the teacher can transfer to the student via class and/or subclass labels. Thus, if we substitute the following capacity for the capacity of the Q-ary Symmetric Channel, all the preceding results of Theorems \ref{th1} hold true.
\begin{flalign} \label{eq10}
 C = \log (N) - H (\text{row of normalized confusion matrix})
\end{flalign}
where $N$ denotes the number of classes or subclasses. Note that the capacity is achieved by a uniform distribution on the class or subclass space. 
\end{remark}

\section{Detailed Analysis of our Experimental Results for the Number of Label Information Bits}

In the paper, we have already shown that a student trained through the SKD framework can benefit from using additional label information. We quantify this extra knowledge by utilizing the proposed upper bound in Theorem $1$ of the paper. To analyze the label bits that the teacher can transfer to the student, we take the best of the $80$ runs based on the binary class F-$1$ score for the training set. The following table details the label bit measurement process by computing the parameters and variables associated with the upper bound. These parameters and variables are determined using the confusion matrix for the training set, which is reused during the transferring phase.

\begin{table}[htb] 
\centering
\begin{tabular}{c c c c c c c c c c}
\hline
{Task} & {$P_{H_{0}}$} & {$P_{H_{1}}$} & {$P_{H_{00}}$} & {$P_{H_{11}}$} & {$K(P_{H_{0}}, P_{H_{1}})$} & ${\alpha^{*}}$ & {Class label bits} & {Subclass label bits} & {Total label bits} \\
\hline
{CL-$11$} & $1.00$ & $0.94$ & - & - & $0.3483$ & $0.4680$ & $0.8363$ & - & $0.8363$\\ 
{SL-$21$} & $0.99$ & $0.93$ & - & $0.96$ & $0.2738$ & $0.4769$ & $0.7915$ & $0.3749$ & $1.1664$\\ 
{SL-$41$} & $1.00$ & $0.82$ & - & $0.86$ & $0.8294$ & $0.4392$ & $0.6441$ & $0.5243$ & $1.1684$\\ 
{SL-$22$} & $0.99$ & $0.80$ & $0.97$ & $0.91$ & $0.8116$ & $0.4467$ & $0.5781$ & $0.6977$ & $1.2758$\\ 
{SL-$12$} & $1.00$ & $0.96$ & $0.97$ & - & $0.2524$ & $0.4754$ & $0.8793$ & $0.4226$ & $\mathbf{1.3019}$ \\
{SL-$14$} & $0.94$ & $0.89$ & $0.84$ & - & $0.2078$ & $0.4868$ & $0.5849$ & $0.5707$ & $1.1556$\\ 
\hline
\end{tabular}
\caption{The proposed upper bound's parameters and variables for our experimental tasks on the MHIST dataset. Total label bits per sample is the sum of the class and subclass label bits per sample. (SL: SubclassLevel, CL: ClassLevel)}
\label{table4}
\end{table}

As shown in Table \ref{table4}, the teacher that was trained in the SL-$12$ task can gain $0.4656$ label information bits per sample from hidden subclass knowledge, increasing its generalization ability. Furthermore, this fine-grained knowledge can be used to improve the student performance via the SKD framework.

\section{Explanation on the Efficiency of Subclass Knowledge Distillation}

The teacher network that trains on subclass labels learns more fine-grained features in comparison to the model trained on the class labels. This can help the teacher to generalize better and, consequently, boost the student performance through the SKD framework. For example, we have shown in the paper that the HP-hard and HP-easy subclasses have some similar fine-grained features that can enhance the teacher's generalization tendencies and then distill more dark knowledge to the student. This knowledge, we explained, causes the student network to bias the incorrect prediction probability of the HP-hard subclass towards the HP-easy subclass rather than SSA. When the evaluation is on the class labels, the student can benefit from this knowledge as long as the temperature hyperparameter is set to a sufficiently high value. To ensure transparency, we show the biasing behaviour of error rate probabilities and learning more features in the following figures for each task on which we ran an experiment. 

\begin{figure} [htp] 
    \centering
    \includegraphics[width=1\linewidth]{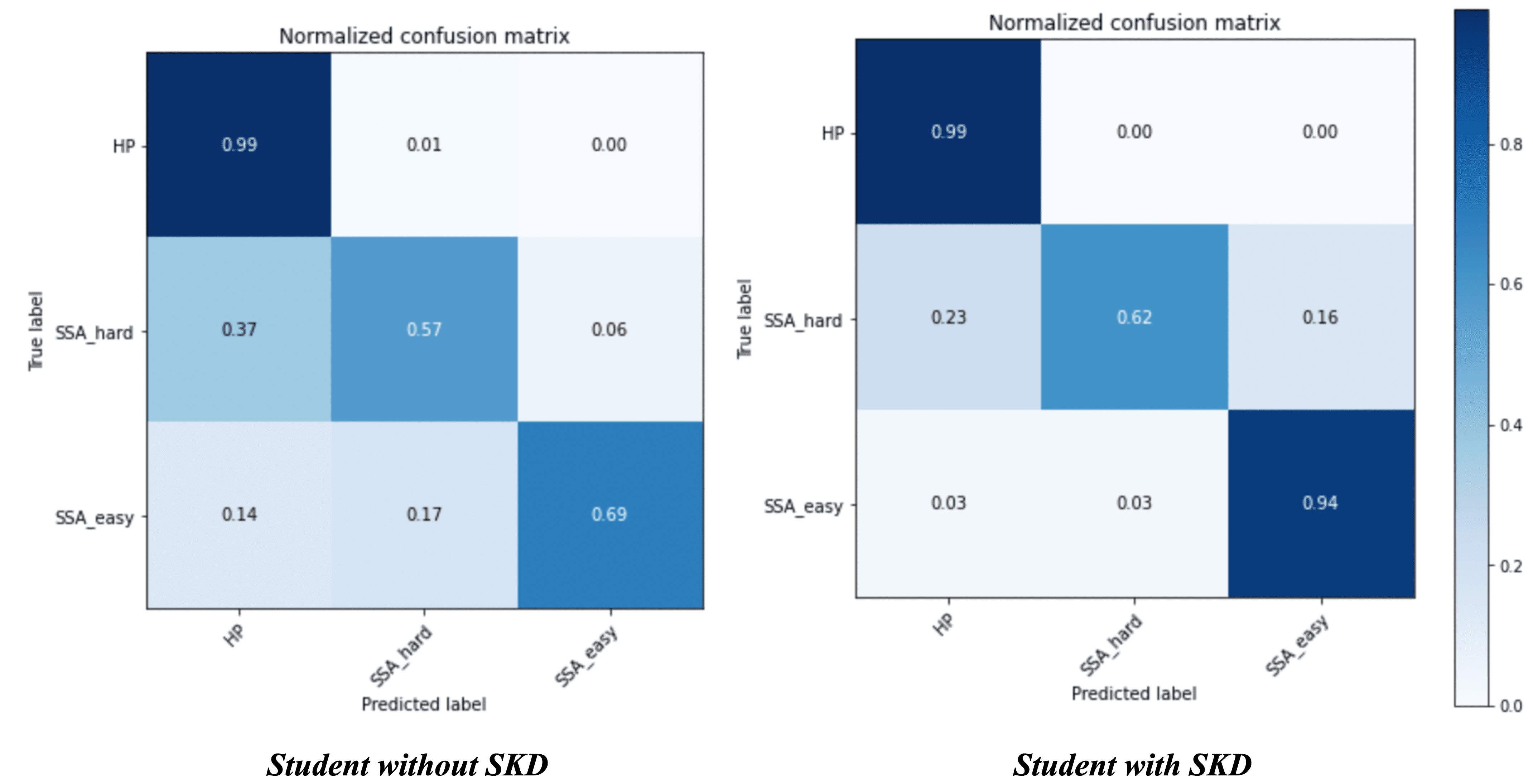}
    \caption{Normalized confusion matrix on the test set for student trained in the SL-$21$ task with and without SKD.}
    \label{fig6}
\end{figure}

\begin{figure} [htp] 
    \centering
    \includegraphics[width=1\linewidth]{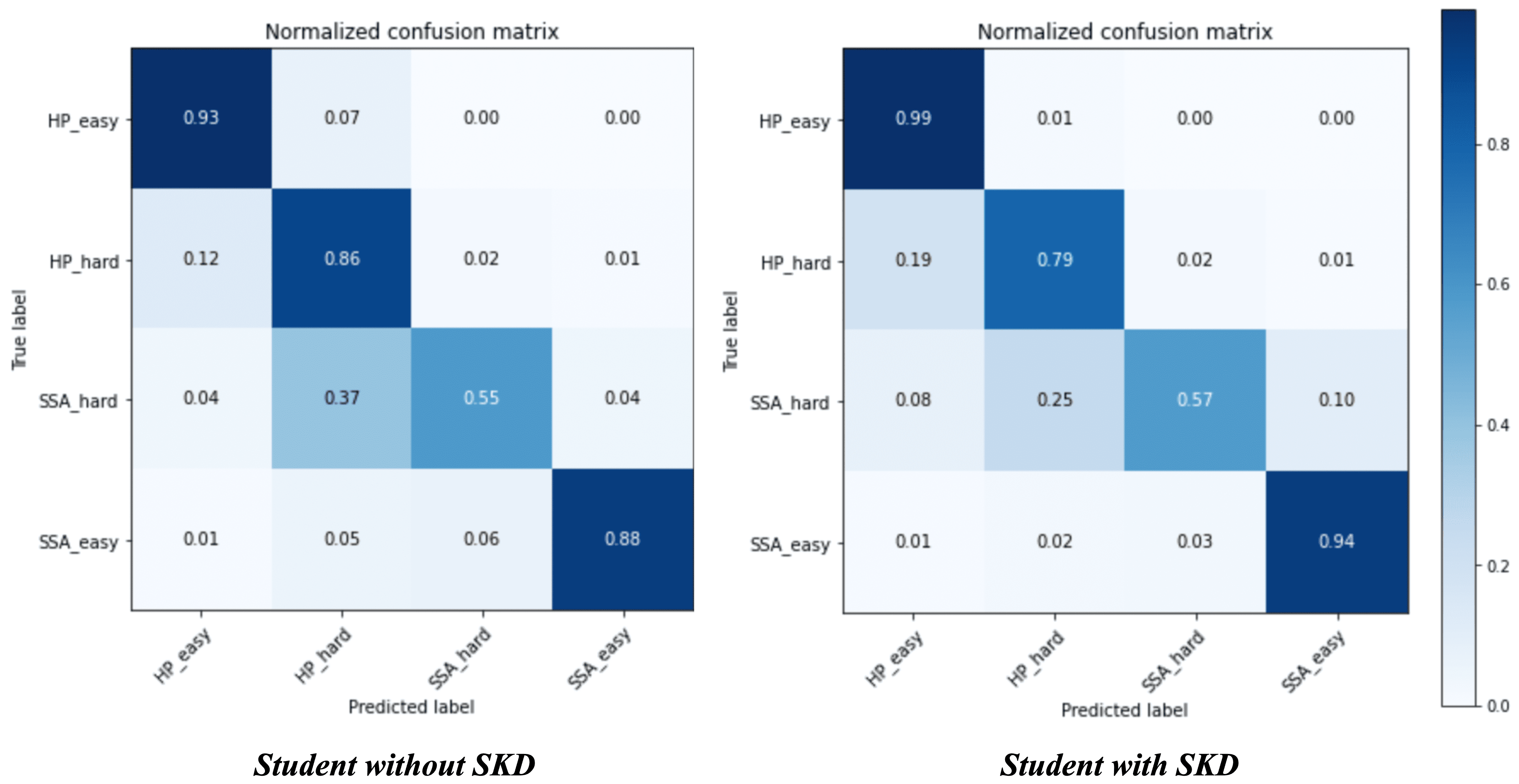}
    \caption{Normalized confusion matrix on the test set for student trained in the SL-$22$ task with and without SKD.}
    \label{fig7}
\end{figure}
\begin{figure} [htp] 
    \centering
    \includegraphics[width=1\linewidth]{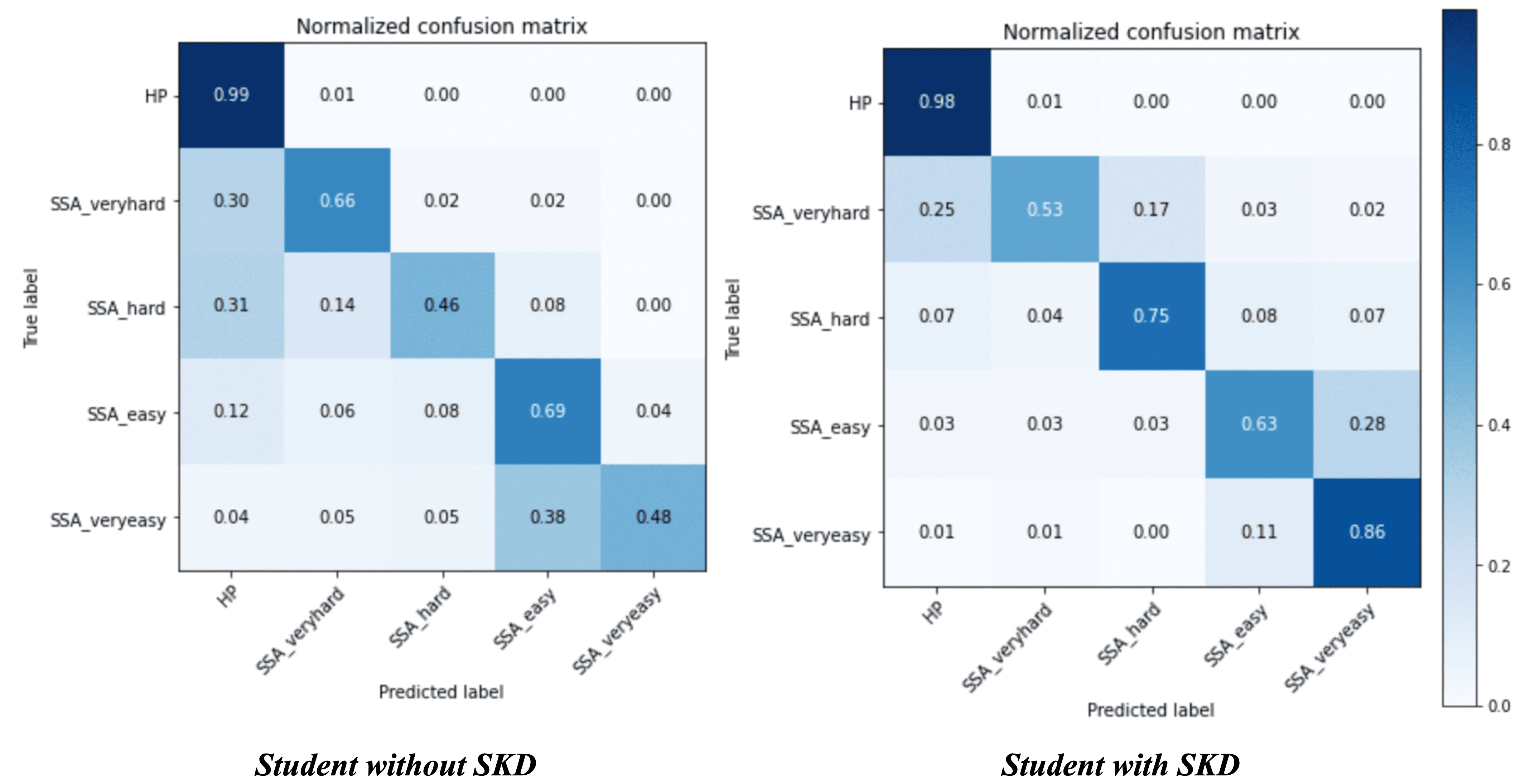}
    \caption{Normalized confusion matrix on the test set for student trained in the SL-$41$ task with and without SKD.}
    \label{fig8}
\end{figure}

\begin{figure} [htp] 
    \centering
    \includegraphics[width=1\linewidth]{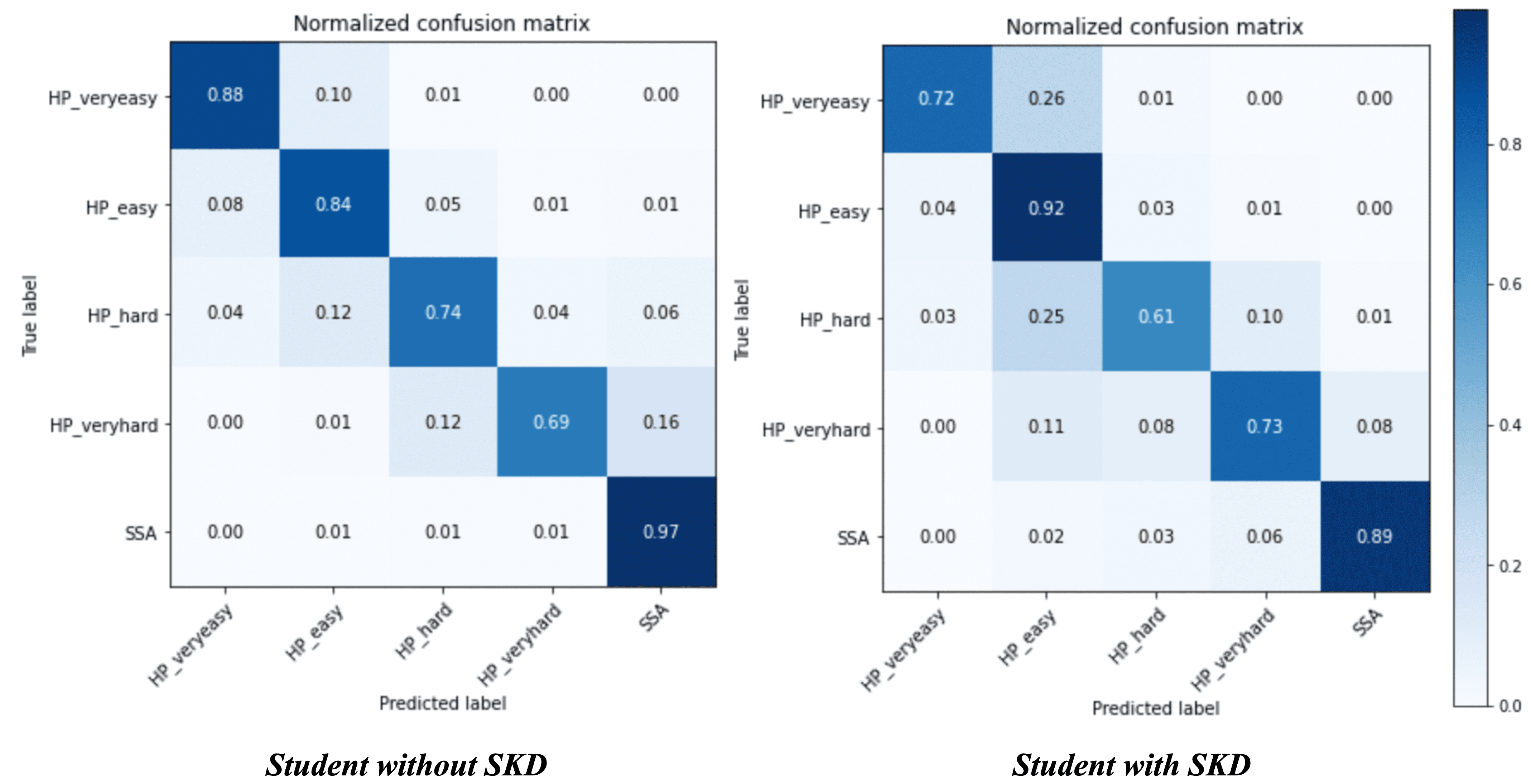}
    \caption{Normalized confusion matrix on the test set for student trained in the SL-$14$ task with and without SKD.}
    \label{fig9}
\end{figure}

\end{document}